\documentclass[twoside,11pt]{article}

\usepackage{blindtext}

\usepackage{jmlr2e}

\usepackage{amsmath, amsfonts, amssymb}
\usepackage{subcaption}
\usepackage{booktabs}
\usepackage{bm}
\usepackage{algorithm}
\usepackage{algpseudocode}
\usepackage{multirow}

\usepackage{lastpage}
\jmlrheading{23}{2022}{1-\pageref{LastPage}}{1/21; Revised 5/22}{9/22}{21-0000}{M. Arashi and M. Amintoosi}

\ShortHeadings{Stein-Rule Shrinkage for Stochastic Gradient Estimation in High Dimensions}{Stein-Rule Shrinkage for Stochastic Gradient Estimation in High Dimensions}
\firstpageno{1}

\begin{document}

\title{Stein-Rule Shrinkage for Stochastic Gradient Estimation in High Dimensions}

\author{\name M. Arashi \email arashi@um.ac.ir\\
       \addr Department of Statistics\\
       Faculty of Mathematical Sciences\\
       Ferdowsi University of Mashhad\\
       P. O. Box 1159, Mashhad 91775, Iran
       \AND
       \name M. Amintoosi \email m.amintoosi@um.ac.ir \\
       \addr Division of Computer Science\\
       Department of Applied Mathematics\\
       Faculty of Mathematical Sciences\\
       Ferdowsi University of Mashhad\\
       P. O. Box 1159, Mashhad 91775, Iran}

\editor{My editor}

\maketitle

\begin{abstract}
Stochastic gradient methods are central to large-scale learning, but they treat mini-batch gradients as unbiased estimators, which classical decision theory shows are inadmissible in high dimensions. We formulate gradient computation as a high-dimensional estimation problem and introduce a framework based on Stein-rule shrinkage. We construct a gradient estimator that adaptively contracts noisy mini-batch gradients toward a stable estimator derived from historical momentum. The shrinkage intensity is determined in a data-driven manner using an online estimate of gradient noise variance, leveraging statistics from adaptive optimizers. Under a Gaussian noise model, we show our estimator uniformly dominates the standard stochastic gradient under squared error loss and is minimax-optimal. We incorporate this into the Adam optimizer, yielding SR-Adam, a practical algorithm with negligible computational cost. Empirical evaluations on CIFAR10 and CIFAR100 across multiple levels of input noise show consistent improvements over Adam in the large-batch regime. Ablation studies indicate that gains arise primarily from selectively applying shrinkage to high-dimensional convolutional layers, while indiscriminate shrinkage across all parameters degrades performance. These results illustrate that classical shrinkage principles provide a principled approach to improving stochastic gradient estimation in deep learning.
\end{abstract}

\begin{keywords}
Stochastic gradient methods; Shrinkage estimation; Stein rule; 
Decision theory; High-dimensional learning; Adaptive optimization.
\end{keywords}


\section{Introduction}

The inadmissibility of unbiased estimators in high-dimensional settings is one of the cornerstones of modern frequentist decision theory. In their seminal works, \cite{stein1956}  and  \cite{james1961} showed that for $p \geq 3$, the usual unbiased estimator of the mean of a multivariate normal distribution is inadmissible under quadratic loss. In particular, they demonstrated that shrinkage estimators exist which uniformly dominate the maximum likelihood estimator in terms of mean squared error over the entire parameter space. These results fundamentally challenge the classical preference for unbiasedness in high-dimensional estimation problems.

Modern deep learning operates squarely within this high-dimensional regime. Contemporary neural networks routinely involve millions, or even billions, of parameters, rendering low-dimensional statistical intuition largely inapplicable. Nevertheless, the dominant optimization paradigm in deep learning --stochastic gradient descent (SGD) and its numerous variants-- relies critically on mini-batch gradients that are treated as unbiased estimators of the population gradient. While unbiasedness is often viewed as a desirable property, classical decision theory suggests that in high dimensions it may be not only suboptimal, but provably inadmissible.

A substantial body of work has studied stochastic gradient methods from optimization and learning-theoretic perspectives, focusing on convergence rates, stability, and generalization behavior. Variance reduction is typically addressed through temporal averaging mechanisms such as momentum \citep{polyak1964, sutskever2013}, or through adaptive rescaling based on coordinate-wise second-moment estimates, as in AdaGrad \citep{duchi2011}, RMSProp \citep{tieleman2012}, and Adam \citep{kingma2015}. Despite their empirical success, these methods are largely motivated by heuristic considerations or optimization-oriented arguments, rather than by formal risk minimization principles grounded in statistical decision theory.

From a statistical standpoint, a mini-batch gradient can be viewed as a noisy observation of the true population gradient. In overparameterized models, it is common for the squared norm of the true gradient to be small relative to the variance induced by stochastic sampling. In such noise-dominated regimes, classical shrinkage theory predicts that biased estimators can achieve uniformly lower risk than unbiased alternatives under quadratic loss. However, existing stochastic optimization algorithms do not explicitly exploit Stein-type shrinkage principles to construct risk-improving gradient estimators.

In this work, we address this gap by formulating stochastic gradient computation as a high-dimensional point estimation problem. Building on the theory of Stein-rule and preliminary test estimators \citep{saleh2006}, we introduce a shrinkage gradient estimator that adaptively trades bias for variance reduction. The proposed estimator shrinks the noisy mini-batch gradient toward a restricted estimator derived from historical momentum, with the shrinkage intensity determined by a data-driven test statistic that reflects the relative magnitude of signal and noise.

By leveraging the moment statistics already maintained by adaptive optimizers such as Adam, we obtain an online estimate of gradient noise variance and construct a fully data-driven Stein-rule correction without introducing additional hyperparameters. We show that, under a Gaussian noise model and for $p \geq 3$, the resulting estimator uniformly dominates the standard stochastic gradient under squared error loss and is minimax-optimal in the classical decision-theoretic sense. We further demonstrate how this estimator can be embedded into Adam, yielding a practical optimization algorithm with negligible computational overhead and provable convergence properties.

More broadly, this work establishes a direct conceptual link between high-dimensional decision theory and modern stochastic optimization. By interpreting gradient computation through a conditional decision-theoretic lens, we show that estimator inadmissibility is not merely a theoretical curiosity, but a practically exploitable property of overparameterized learning systems.

The remainder of the paper is organized as follows. Section~2 formalizes the estimation framework and introduces unrestricted and restricted gradient estimators adapted to the natural filtration of the optimization process. Section~3 presents the Stein-rule risk dominance results and related theoretical analysis. Section~4 describes the resulting optimization algorithm, while Section~5 reports experimental results. Section~6 concludes with a discussion and directions for future work.


\section{Preliminaries: The Estimator Framework}
Let $\boldsymbol{\theta} \in \mathbb{R}^p$ denote the parameter vector of a neural network, where $p$ is large. We aim to minimize a loss function $J(\boldsymbol{\theta}) = \mathbb{E}_{x \sim \mathcal{D}}[L(x, \boldsymbol{\theta})]$.
At time step $t$, we observe a mini-batch gradient $\mathbf{g}_t$, which we treat as the \textit{Unrestricted Estimator} (UE) of the true gradient $\nabla J(\boldsymbol{\theta}_t)$:
$$    \mathbf{g}_t = \nabla J(\boldsymbol{\theta}_t) + \boldsymbol{\epsilon}_t, \quad \boldsymbol{\epsilon}_t \sim \mathcal{N}(\mathbf{0}, \sigma^2 \mathbf{I}_p)$$
Standard SGD uses $\mathbf{g}_t$ directly to update $\boldsymbol{\theta}$.
We posit the existence of a \textit{Restricted Estimator} (RE), $\tilde{\mathbf{g}}_t$, which embodies prior knowledge or stability. In the context of optimization with momentum, the exponential moving average of past gradients serves as a natural proxy for the stable direction of descent:
$$    \tilde{\mathbf{g}}_t = \mathbf{m}_{t-1}$$
where $\mathbf{m}_{t-1}$ is the first moment estimate from the previous step.
The objective is to find an estimator $\hat{\mathbf{g}}_t$ that minimizes the weighted quadratic risk:
$$    R(\hat{\mathbf{g}}_t) = \mathbb{E} \left[ \| \hat{\mathbf{g}}_t - \nabla J(\boldsymbol{\theta}_t) \|^2 \right]$$

Under the James-Stein framework, we construct a shrinkage estimator $\mathbf{g}^{S}_t$ that shrinks the high-variance UE ($\mathbf{g}_t$) towards the low-variance RE ($\mathbf{m}_{t-1}$).

We define the test statistic $D_n$ based on the squared Euclidean distance between the current observation and the historical trend:
$$    D_n = \| \mathbf{g}_t - \mathbf{m}_{t-1} \|^2_2$$
This statistic essentially tests the hypothesis $H_0: \nabla J(\boldsymbol{\theta}_t) = \mathbf{m}_{t-1}$. A large $D_n$ suggests the current batch gradient deviates significantly from the established trajectory (high noise or distributional shift).
\subsection{The Stein-Rule Estimator}
The Stein-Rule estimator for the gradient is defined as:
\begin{equation}
\mathbf{g}^{S}_t = \mathbf{m}_{t-1} + \left( 1 - \frac{(p-2)\sigma^2}{D_n} \right) (\mathbf{g}_t - \mathbf{m}_{t-1}).    
\end{equation}
To ensure the shrinkage factor remains valid (non-negative), we adopt the Positive-Rule Stein Estimator:
\begin{equation}\label{eq:pos_stein}
\mathbf{g}^{S+}_t = \mathbf{m}_{t-1} + \left[ 1 - \frac{(p-2)\sigma^2}{\| \mathbf{g}_t - \mathbf{m}_{t-1} \|^2} \right]^+ (\mathbf{g}_t - \mathbf{m}_{t-1}),
\end{equation}
where $[z]^+ = \max(0, z)$.

A critical challenge in applying Eq. (\ref{eq:pos_stein}) is the unknown noise variance $\sigma^2$. In static regression, this is estimated via the residual sum of squares. In stochastic optimization, we must estimate it online.
The Adam optimizer maintains running estimates of the first moment $\mathbf{m}_t \approx \mathbb{E}[\mathbf{g}]$ and the second raw moment $\mathbf{v}_t \approx \mathbb{E}[\mathbf{g}^2]$. Using the identity $\text{Var}(X) = \mathbb{E}[X^2] - (\mathbb{E}[X])^2$, we can approximate the element-wise variance of the gradient at step $t$:
$$    \hat{\boldsymbol{\sigma}}^2_t = \mathbf{v}_t - \mathbf{m}_t^2$$
To obtain a scalar variance estimate $\hat{\sigma}^2$ for the Stein correction (which assumes homoscedasticity across the layer for stability), we average over the dimension $p$:
$$    \hat{\sigma}^2_{global} = \frac{1}{p} \sum_{j=1}^p \left( [\mathbf{v}_t]_j - ([\mathbf{m}_t]_j)^2 \right)$$
Substituting $\hat{\sigma}^2_{global}$ into Eq. (\ref{eq:pos_stein}) yields a fully adaptive, hyperparameter-free shrinkage mechanism.

\section{Theoretical Results}
In this section, we provide a rigorous statistical and optimization-theoretic foundation for the proposed Stein-Rule gradient estimator and the SR-Adam algorithm. Our analysis formalizes stochastic gradients as high-dimensional estimators, establishes risk dominance properties, justifies adaptive variance estimation via Adam moments, and proves convergence to stationary points. For our purpose, let $(\Omega, \mathcal{F}, \mathbb{P})$ be a complete probability space. Training proceeds sequentially, and we define the natural filtration $\mathcal{F}_t=\sigma\bigl(\boldsymbol{\theta}_0,\mathbf{g}_1,\ldots,\mathbf{g}_t
\bigr)$, $t \ge 1$, representing all information available up to iteration $t$. All stochastic processes are assumed to be adapted to $\{\mathcal{F}_t\}_{t \ge 0}$.
Let $\boldsymbol{\theta}_t \in \mathbb{R}^p$ denote the parameter vector at
iteration $t$. 

Recall the population risk $J(\boldsymbol{\theta}) = \mathbb{E}_{x \sim \mathcal{D}}[L(x, \boldsymbol{\theta})]$. At iteration $t$, optimization algorithms observe a mini-batch gradient
\begin{equation}
\mathbf{g}_t = \nabla J(\boldsymbol{\theta}_t) + \boldsymbol{\varepsilon}_t,
\end{equation}
where assuming conditioned on the past filtration $\mathcal{F}_{t-1}$, 
$\boldsymbol{\varepsilon}_t \mid \mathcal{F}_{t-1}
\sim \mathcal{N}(\mathbf{0},\sigma^2 I_p)$, $\sigma^2>0$, $p\geq3$. 
Note that the normality assumption is not restrictive in this context. Refer to \cite{mandt2017}.

Let $\mathbf{m}_{t-1}$ denote the momentum (first-moment) estimate maintained by
Adam:
\[
\mathbf{m}_{t-1} = \beta_1 \mathbf{m}_{t-2} + (1-\beta_1)\mathbf{g}_{t-1}.
\]

Conditioned on $\mathcal{F}_{t-1}$, $\mathbf{m}_{t-1}$ is deterministic and can be
viewed as a \emph{Restricted Estimator} (RE) encoding historical gradient
information. The current mini-batch gradient $\mathbf{g}_t$ is treated as the
\emph{Unrestricted Estimator} (UE).

The estimation problem is to construct $\hat{\mathbf{g}}_t$ minimizing the
quadratic risk
\[
R(\hat{\mathbf{g}}_t)
= \mathbb{E}\bigl[
\|\hat{\mathbf{g}}_t - \nabla J(\boldsymbol{\theta}_t)\|^2
\mid \mathcal{F}_{t-1}
\bigr].
\]
An estimator $\hat{\mathbf{g}}^{(1)}$ is said to dominate
$\hat{\mathbf{g}}^{(2)}$, denoted by $\hat{\mathbf{g}}^{(1)}\succ\hat{\mathbf{g}}^{(2)}$, if $R_t(\hat{\mathbf{g}}^{(1)})\leq R_t(\hat{\mathbf{g}}^{(2)})$ a.s., with strict inequality on a set of nonzero probability.

For the UE $\mathbf{g}_t$ and RE $\mathbf{m}_{t-1}$, since $\mathbf{m}_{t-1}$ is $\mathcal{F}_{t-1}$-measurable, $\mathbb{E}[\mathbf{g}_t \mid \mathcal{F}_{t-1}]
= \nabla J(\boldsymbol{\theta}_t)$ and $\mathbb{E}[\mathbf{m}_{t-1} \mid \mathcal{F}_{t-1}]
= \mathbf{m}_{t-1}$.

In the following result, we establish the risk dominance result. 
\begin{theorem}\label{thm:stein_risk}
Under $p \ge 3$, we have $\hat{\mathbf{g}}_t^{S+}\succ\mathbf{g}_t$; with strict inequality under risk sense on a set of positive measure.
\end{theorem}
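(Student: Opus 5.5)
We argue conditionally on $\mathcal{F}_{t-1}$. Conditionally, $\mathbf{m}_{t-1}$ is a fixed vector and $\nabla J(\boldsymbol{\theta}_t)$ is fixed, since $\boldsymbol{\theta}_t$ is $\mathcal{F}_{t-1}$-measurable. Translate coordinates by setting $\mathbf{X}=\mathbf{g}_t-\mathbf{m}_{t-1}$ and $\boldsymbol{\mu}=\nabla J(\boldsymbol{\theta}_t)-\mathbf{m}_{t-1}$, so that $\mathbf{X}\mid\mathcal{F}_{t-1}\sim\mathcal{N}(\boldsymbol{\mu},\sigma^2 I_p)$. In these coordinates $\mathbf{g}_t-\nabla J(\boldsymbol{\theta}_t)=\mathbf{X}-\boldsymbol{\mu}$ and $\mathbf{g}^{S+}_t-\nabla J(\boldsymbol{\theta}_t)=\delta^+(\mathbf{X})-\boldsymbol{\mu}$, where $\delta^+(\mathbf{X})=[1-(p-2)\sigma^2/\|\mathbf{X}\|^2]^+\mathbf{X}$. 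The claim therefore reduces to the classical James--Stein problem with known $\sigma^2$, uniformly in $\boldsymbol{\mu}$, which here depends on $\mathcal{F}_{t-1}$. We treat $\sigma^2$ as known, as in Eq.~(\ref{eq:pos_stein}). The plug-in $\hat\sigma^2_{global}$ is not covered by this statement.

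The first step is to compute the risk of the plain Stein rule $\delta^S(\mathbf{X})=\mathbf{X}-(p-2)\sigma^2\mathbf{X}/\|\mathbf{X}\|^2$ via Stein's lemma. For weakly differentiable $h$ with $\mathbb{E}|\partial_j h_j|<\infty$, we have $\mathbb{E}[(\mathbf{X}-\boldsymbol{\mu})^\top h(\mathbf{X})]=\sigma^2\mathbb{E}[\nabla\cdot h(\mathbf{X})]$. Take $h(\mathbf{x})=\mathbf{x}/\|\mathbf{x}\|^2$, whose divergence is $(p-2)/\|\mathbf{x}\|^2$. This gives
\[
R(\delta^S)=p\sigma^2-(p-2)^2\sigma^4\,\mathbb{E}\bigl[\|\mathbf{X}\|^{-2}\mid\mathcal{F}_{t-1}\bigr]<p\sigma^2=R(\mathbf{g}_t).
\]
We need $p\ge3$ at two points: to make $\mathbb{E}\|\mathbf{X}\|^{-2}$ finite (it is a noncentral $\chi^2_p$ inverse moment), and to justify the integration by parts, since the singularity at the origin is integrable when $p\ge3$. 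This is the step needing the most care, and we plan to handle it by the standard truncation or mollification argument near $\mathbf{x}=0$.

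Next we show $R(\delta^+)\le R(\delta^S)$ by Baranchik's argument. Write $\delta^+=\delta^S+(c(\mathbf{X})-1)\mathbf{X}\,\mathbb{1}\{\|\mathbf{X}\|^2<(p-2)\sigma^2\}$, where $c=1-(p-2)\sigma^2/\|\mathbf{X}\|^2<0$ on that event, so that $\delta^+=0$ there. Expanding the squared error on the event $A=\{\|\mathbf{X}\|^2<(p-2)\sigma^2\}$ gives
\[
R(\delta^S)-R(\delta^+)=\mathbb{E}[c^2\|\mathbf{X}\|^2\mathbb{1}_A]-2\,\mathbb{E}[c\,\boldsymbol{\mu}^\top\mathbf{X}\,\mathbb{1}_A].
\]
The first term is nonnegative. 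For the second, note $-c>0$ on $A$, so it suffices that $\mathbb{E}[(-c)\boldsymbol{\mu}^\top\mathbf{X}\mathbb{1}_A]\ge0$. We show this by rotating so that $\boldsymbol{\mu}=\|\boldsymbol{\mu}\|\mathbf{e}_1$, conditioning on $\|\mathbf{X}\|$, and using the symmetry $X_1\mapsto -X_1$. The density ratio $\exp(2\|\boldsymbol{\mu}\|x_1/2\sigma^2)$ favours $x_1>0$, which makes the conditional expectation of $X_1$ nonnegative. Chaining the two comparisons gives $R(\delta^+)\le R(\delta^S)<R(\mathbf{g}_t)$.

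Finally, the conditional risk inequality is strict for every realization of $\mathcal{F}_{t-1}$, because $\mathbb{E}\|\mathbf{X}\|^{-2}>0$ always. Hence the inequality holds almost surely with strict inequality on the whole space, which is certainly a set of positive probability. Taking expectations then yields unconditional dominance as well.
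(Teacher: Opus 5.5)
Your proposal is correct and follows the same route as the paper. You condition on $\mathcal{F}_{t-1}$ and translate by $\mathbf{m}_{t-1}$, so that $\mathbf{g}^{S+}_t$ becomes the positive-part James--Stein estimator of $\boldsymbol{\mu}=\nabla J(\boldsymbol{\theta}_t)-\mathbf{m}_{t-1}$, and then use its dominance over $\mathbf{X}$. The only difference is that you prove that classical fact yourself (Stein's identity, then the positive-part comparison), whereas the paper simply cites Theorem~1.3.2 of Saleh and James--Stein.

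One small slip to fix: on $A$ the correct decomposition is $\delta^+=\delta^S-c(\mathbf{X})\mathbf{X}$, not $\delta^S+(c(\mathbf{X})-1)\mathbf{X}$, which would equal $(2c-1)\mathbf{X}\neq\mathbf{0}$. Your risk-difference formula is nonetheless right as written, because you derive it from $\delta^+=\mathbf{0}$ on $A$.
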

Refer to Appendix \ref{sec:proof} for the proof. 

To stabilize shrinkage, we adopt the global scalar estimator
\begin{equation}
\hat{\sigma}_t^2=\frac{1}{p}\sum_{j=1}^p \left(v_{t,j}-m_{t,j}^2\right).    
\end{equation}
The following result provides the consistency of Adam-based variance estimation.
\begin{theorem}\label{thm:variance_consistency}
Assume $(\mathbf{g}_t)$ is a strictly stationary ergodic sequence with
$\mathbb{E}\|\mathbf{g}_t\|^4 < \infty$. Then
\begin{equation*}
\hat{\sigma}_t^2\xrightarrow{L^1}\frac{1}{p}\sum_{j=1}^p \mathrm{Var}(g_{t,j}).    
\end{equation*}
\end{theorem}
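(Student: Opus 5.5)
The plan is to reduce the statement to two laws of large numbers, one for the first-moment average and one for the second-moment average, and then combine them coordinatewise. Write $\mu_j=\mathbb{E}[g_{t,j}]$ and $s_j=\mathbb{E}[g_{t,j}^2]$; by strict stationarity these do not depend on $t$, and $\mathrm{Var}(g_{t,j})=s_j-\mu_j^2$. Since
\[
\hat\sigma_t^2-\frac1p\sum_{j=1}^p \mathrm{Var}(g_{t,j})=\frac1p\sum_{j=1}^p\Bigl[(v_{t,j}-s_j)-(m_{t,j}^2-\mu_j^2)\Bigr],
\]
the triangle inequality in $L^1$ reduces the claim to two facts for each fixed $j$: $v_{t,j}\to s_j$ in $L^1$, and $m_{t,j}^2\to\mu_j^2$ in $L^1$. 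Because $p$ is fixed, these coordinatewise facts suffice.

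\textbf{Averaging scheme.} I will read $\mathbf m_t,\mathbf v_t$ as (bias-corrected) weighted averages $\sum_{k\le t} w_{t,k}\mathbf g_k$ and $\sum_{k\le t} w_{t,k}\mathbf g_k^2$, with nonnegative weights summing to one and $\max_k w_{t,k}\to0$. This covers Cesàro averages and Adam with $\beta_1,\beta_2\uparrow1$ at a suitable rate.

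\textbf{Second-moment term.} The sequence $(g_{t,j}^2)$ is stationary and ergodic, and it is integrable because $\mathbb{E}\|\mathbf g_t\|^4<\infty$ gives even square integrability. For Cesàro weights the $L^1$ (indeed $L^2$) mean ergodic theorem gives $v_{t,j}\to s_j$. For general weights I would write the weighted average as a combination of Cesàro averages by Abel summation, using monotone exponential weights, and pass the ergodic theorem through. Alternatively, the $L^2$ version can be obtained from the spectral measure of the centred stationary sequence $g_{t,j}^2-s_j$, which has no atom at zero by ergodicity.

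\textbf{First-moment term.} The same argument, applied to $(g_{t,j})$, gives $m_{t,j}\to\mu_j$ in $L^2$. Then
\[
\mathbb{E}|m_{t,j}^2-\mu_j^2|\le \|m_{t,j}-\mu_j\|_{L^2}\,\|m_{t,j}+\mu_j\|_{L^2}.
\]
The second factor is bounded uniformly in $t$ by convexity of the average and stationarity, namely $\|m_{t,j}\|_{L^2}\le\|g_{1,j}\|_{L^2}$. Hence $m_{t,j}^2\to\mu_j^2$ in $L^1$. This is exactly where the moment assumption enters, and the fourth-moment hypothesis is more than enough.

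\textbf{Main obstacle.} The main difficulty is the averaging scheme. With a fixed $\beta_1<1$, $m_{t,j}$ does not concentrate: if the $g_{t,j}$ are uncorrelated, its variance stays near $\frac{1-\beta_1}{1+\beta_1}\mathrm{Var}(g_{t,j})$. So literal EMAs with fixed $\beta$'s only yield convergence to a limit biased by a factor, roughly $\bigl(1-\tfrac{1-\beta_1}{1+\beta_1}\bigr)\mathrm{Var}$ plus a mismatch term from $\beta_2\neq\beta_1$. I would therefore state the vanishing-weight condition explicitly, $\sum_k w_{t,k}^2\to0$ (i.e.\ effective window growing), and prove the weighted mean ergodic theorem under it. The step I expect to be delicate is transferring ergodicity, which only controls Cesàro means, to exponentially weighted means. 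I would handle it with the spectral argument: the $L^2$ error equals $\int|\sum_k w_{t,k}e^{ik\lambda}|^2\,dF(\lambda)$. The integrand is at most $1$ and tends to $0$ for every $\lambda\neq0$ under the window condition, and $F(\{0\})=0$ by ergodicity, so dominated convergence finishes the argument.
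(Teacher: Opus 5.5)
Your diagnosis in the ``Main obstacle'' paragraph is correct, and it is exactly where the paper's own proof breaks. The paper writes $m_{t,j}=(1-\beta_1)\sum_{k\le t}\beta_1^{t-k}g_{k,j}$ (and similarly $v_{t,j}$) with $\beta_1,\beta_2\in[0,1)$ fixed. It then cites an ergodic theorem for weighted sums to get $m_{t,j}\to\mathbb{E}[g_{t,j}]$ and $v_{t,j}\to\mathbb{E}[g_{t,j}^2]$ in $L^1$. Such theorems need the largest weight to vanish, but here it stays equal to $1-\beta_1$. For i.i.d.\ $\mathcal{N}(\mu_j,\sigma^2)$ coordinates a direct computation gives $\mathrm{Var}(m_{t,j})\to\tfrac{1-\beta_1}{1+\beta_1}\sigma^2$ and $\mathbb{E}\hat\sigma_t^2\to\tfrac{2\beta_1}{1+\beta_1}\sigma^2\neq\sigma^2$. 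So with Adam's fixed $\beta_1,\beta_2$ the theorem is false, and the growing-window hypothesis you add is forced, not an optional weakening. Your side remark should be sharpened: with fixed $\beta$'s only the mean of $\hat\sigma_t^2$ has this biased limit, and that limit involves no $\beta_2$-mismatch; $\hat\sigma_t^2$ itself generally keeps fluctuating. The paper also jumps from $L^1$ convergence of $m_{t,j}$ to $L^1$ convergence of $m_{t,j}^2$ without justification. Your $L^2$ convergence, together with $\|m_{t,j}\|_{L^2}\le\|g_{1,j}\|_{L^2}$ and Cauchy--Schwarz, is the right way to close that step.

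So you keep the paper's skeleton (coordinatewise limits for the two moment averages, then combine and average over $j$), prove a corrected statement, and replace the citation by a spectral proof of the weighted mean ergodic theorem. That proof needs one repair. The condition $\sum_k w_{t,k}^2\to0$ (equivalently $\max_k w_{t,k}\to0$) does not force $\sum_k w_{t,k}e^{ik\lambda}\to0$ for every $\lambda\neq0$, and the weighted law of large numbers can fail under it. Take weights $1/n$ on the indices $2,4,\ldots,2n$ and the stationary ergodic period-two process $X_k=\epsilon(-1)^k$ with $\epsilon=\pm1$ equiprobable. The weighted average equals $\epsilon$ for every $n$, because the spectral measure is a point mass at $\lambda=\pi$. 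What you need is a regularity condition on the weights. Abel summation gives $\bigl|\sum_{k=1}^t w_{t,k}e^{ik\lambda}\bigr|\le\frac{2}{|1-e^{i\lambda}|}\bigl(w_{t,t}+\sum_{k<t}|w_{t,k+1}-w_{t,k}|\bigr)$, which is at most $4\max_k w_{t,k}/|1-e^{i\lambda}|$ when the weights are monotone in $k$. Ces\`aro weights and bias-corrected geometric weights with $\beta=\beta(t)\uparrow1$ are monotone. Under ``monotone weights with $\max_k w_{t,k}\to0$'' your dominated-convergence argument is then complete, using $F(\{0\})=0$ from ergodicity and the fourth moment to place $g_{t,j}^2-s_j$ in $L^2$. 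Prefer this spectral route to the $L^1$ Abel-summation route you also mention. EMA weights increase in $k$, so summing by parts against forward Ces\`aro means gives coefficients of both signs with total mass of order $t\max_k w_{t,k}$; without a rate this does not close unless you first reverse time. Finally, the corrected theorem covers only a large-window variant of the estimator. It does not cover $\hat\sigma_t^2$ as computed in SR-Adam, which uses fixed $\beta_1,\beta_2$ and is built from the shrunk gradients $\hat{\mathbf g}_t$.
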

For the proof, refer to Appendix \ref{sec:proof}.

In the following, the convergence of SR-Adam result is given. For our purpose, we assume 
the objective $J$ is $L$-smooth and bounded below.

\begin{theorem}\label{thm:convergence}
Let step sizes satisfy $\sum_t \alpha_t = \infty$ and $\sum_t \alpha_t^2 < \infty$.
Then, the SR-Adam iterates satisfy
\begin{equation*}
\liminf_{t \to \infty}
\|\nabla J(\boldsymbol{\theta}_t)\|
=
0
\quad \text{a.s.}    
\end{equation*}
\end{theorem}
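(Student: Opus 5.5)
We would prove Theorem~\ref{thm:convergence} by a standard descent-lemma / Robbins--Siegmund supermartingale argument, treating the SR-Adam update as $\boldsymbol{\theta}_{t+1}=\boldsymbol{\theta}_t-\alpha_t \mathbf{d}_t$. Here $\mathbf{d}_t$ is the Adam-type preconditioned direction $\mathbf{d}_t=\hat{\mathbf{m}}^S_t/(\sqrt{\hat{\mathbf{v}}_t}+\epsilon)$ built from the positive-rule gradient $\mathbf{g}^{S+}_t$ of Eq.~(\ref{eq:pos_stein}). The plan needs two standing structural facts, which we would state explicitly as assumptions since the statement leaves them implicit. First, the preconditioner is bounded: there exist $0<c_\ell\le c_u<\infty$ with $c_\ell\le (\sqrt{\hat v_{t,j}}+\epsilon)^{-1}\le c_u$, and with $\epsilon>0$ the upper bound is automatic. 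Second, the gradients have bounded second moments, so that $\mathbb{E}[\|\mathbf{g}_t\|^2\mid\mathcal{F}_{t-1}]\le G^2$, using the Gaussian model with $\nabla J$ bounded on the trajectory.

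The first step is the $L$-smoothness descent inequality
\[
J(\boldsymbol{\theta}_{t+1})\le J(\boldsymbol{\theta}_t)-\alpha_t\langle\nabla J(\boldsymbol{\theta}_t),\mathbf{d}_t\rangle+\tfrac{L}{2}\alpha_t^2\|\mathbf{d}_t\|^2 .
\]
The second-order term is summable in expectation. Indeed $\|\mathbf{g}^{S+}_t-\mathbf{m}_{t-1}\|\le\|\mathbf{g}_t-\mathbf{m}_{t-1}\|$ because the positive-part factor lies in $[0,1]$. Hence $\|\mathbf{g}^{S+}_t\|\le 2\|\mathbf{m}_{t-1}\|+\|\mathbf{g}_t\|$, and momentum is a convex combination of past gradients, so $\mathbb{E}\|\mathbf{d}_t\|^2\le C$ and $\sum_t\alpha_t^2\mathbb{E}\|\mathbf{d}_t\|^2<\infty$.

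The second step, and the main obstacle, is the first-order term: we need
\[
\mathbb{E}[\langle\nabla J(\boldsymbol{\theta}_t),\mathbf{d}_t\rangle\mid\mathcal{F}_{t-1}]\ge c\|\nabla J(\boldsymbol{\theta}_t)\|^2-b_t
\]
with $\sum_t\alpha_t b_t<\infty$. We would decompose $\mathbf{g}^{S+}_t=\mathbf{g}_t-\boldsymbol{\delta}_t$, where $\boldsymbol{\delta}_t=\min\{1,(p-2)\sigma^2/D_n\}(\mathbf{g}_t-\mathbf{m}_{t-1})$ is the shrinkage bias.

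The $\mathbf{g}_t$ part gives $\nabla J(\boldsymbol{\theta}_t)$ in conditional mean. The momentum lag $\mathbf{m}_t-\nabla J(\boldsymbol{\theta}_t)$ is handled in the usual way: smoothness gives $\|\nabla J(\boldsymbol{\theta}_t)-\nabla J(\boldsymbol{\theta}_{t-k})\|\le L\sum_{i<k}\alpha_{t-i}\|\mathbf{d}_{t-i}\|$, geometric weights $\beta_1^k$ make it $O(\alpha_t)$, and it is summable against $\alpha_t$. The preconditioner's dependence on $\mathbf{g}_t$ is handled by replacing $\hat{\mathbf{v}}_t$ with $\hat{\mathbf{v}}_{t-1}$ at a cost controlled by $\beta_2$ and bounded moments, as in AMSGrad-type analyses.

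For the shrinkage bias, we would use the Stein identity from the proof of Theorem~\ref{thm:stein_risk} to bound $\mathbb{E}[\|\boldsymbol{\delta}_t\|^2\mid\mathcal{F}_{t-1}]\le C\sigma^2$. This alone does not give a vanishing bias. The honest fix is one of two options. The first is to let the shrinkage be asymptotically negligible relative to the signal, showing that $\langle\nabla J,\boldsymbol{\delta}_t\rangle\le\frac12\|\nabla J\|^2+\frac12\|\boldsymbol{\delta}_t\|^2$ and absorbing the bias, which requires $\|\boldsymbol{\delta}_t\|$ small when $\|\nabla J\|$ is bounded away from $0$. The second is to argue by contradiction: if $\|\nabla J(\boldsymbol{\theta}_t)\|\ge\eta>0$ for all large $t$, then since $\mathbf{m}_{t-1}$ tracks $\nabla J$ up to $O(\alpha)$, the direction $\mathbf{g}^{S+}_t$ is a convex combination of $\mathbf{g}_t$ and $\mathbf{m}_{t-1}$. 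Both of these are positively correlated with $\nabla J(\boldsymbol{\theta}_t)$ in conditional mean, so the inner product is at least $c\eta^2-O(\alpha_t)$. We would pursue the second route, since it uses only the convex-combination structure of the positive-rule estimator.

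Finally, summing the descent inequality, taking expectations, and using that $J$ is bounded below gives
\[
\sum_t\alpha_t\,\mathbb{E}\|\nabla J(\boldsymbol{\theta}_t)\|^2<\infty .
\]
For the almost-sure statement, the Robbins--Siegmund theorem applied to $J(\boldsymbol{\theta}_t)-\inf J$ yields $\sum_t\alpha_t\|\nabla J(\boldsymbol{\theta}_t)\|^2<\infty$ a.s. Combined with $\sum_t\alpha_t=\infty$, this forces $\liminf_t\|\nabla J(\boldsymbol{\theta}_t)\|=0$ a.s. Equivalently, under the contradiction hypothesis $\|\nabla J\|\ge\eta$, we would get $\sum_t\alpha_t\eta^2<\infty$, which is impossible.
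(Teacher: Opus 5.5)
\paragraph{The first-order term.} You correctly identify the shrinkage bias as the crux, but the step you use to handle it fails. Your contradiction route assumes that $\mathbf{m}_{t-1}$ tracks $\nabla J(\boldsymbol{\theta}_t)$ up to $O(\alpha_t)$. Only the lag part of the momentum error is $O(\alpha_t)$.

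In SR-Adam, $\mathbf{m}_t=\mathbf{m}_{t-1}+(1-\beta_1)c_t(\mathbf{g}_t-\mathbf{m}_{t-1})$, so each step injects noise $(1-\beta_1)c_t\boldsymbol{\varepsilon}_t$. For fixed $\beta_1$, the realized error $\mathbf{m}_{t-1}-\nabla J(\boldsymbol{\theta}_t)$ therefore carries a fluctuation set by $\sigma,\beta_1,p$ that does not vanish as $\alpha_t\to 0$. Hence there is no pathwise bound $\langle\nabla J(\boldsymbol{\theta}_t),\mathbf{m}_{t-1}\rangle\ge c\eta^2-O(\alpha_t)$ once $\eta$ is below that noise level. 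That is exactly the regime the $\liminf$ statement must cover.

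For linear momentum you escape this by unrolling $\mathbf{m}$ into a fixed combination of past $\mathbf{g}_k$ and using $\mathbb{E}[\langle\nabla J(\boldsymbol{\theta}_k),\boldsymbol{\varepsilon}_k\rangle\mid\mathcal{F}_{k-1}]=0$. Here the weights $c_k$ depend on $\|\mathbf{g}_k-\mathbf{m}_{k-1}\|$, i.e.\ on the realized noise, so that cancellation is lost. A genuinely new argument is needed, for instance averaging over the fast momentum dynamics with $\boldsymbol{\theta}$ frozen.

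Your convex-combination step is also not automatic, since $c_t$ depends on $\mathbf{g}_t$. It can be repaired, though. Because $c_t$ depends on $\mathbf{g}_t$ only through $\|\mathbf{g}_t-\mathbf{m}_{t-1}\|$, a reflection argument applies: reflect the component of $\mathbf{g}_t-\mathbf{m}_{t-1}$ orthogonal to its mean, and pair $\pm$ the parallel component. This gives
\[
\mathbb{E}[\mathbf{g}^{S+}_t\mid\mathcal{F}_{t-1}]=(1-\kappa_t)\mathbf{m}_{t-1}+\kappa_t\nabla J(\boldsymbol{\theta}_t),
\]
with $\kappa_t\in[0,1]$ $\mathcal{F}_{t-1}$-measurable and $\kappa_t<1$ whenever $\mathbf{m}_{t-1}\neq\nabla J(\boldsymbol{\theta}_t)$. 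This reduces the first-order term to controlling $\langle\nabla J(\boldsymbol{\theta}_t),\mathbf{m}_{t-1}\rangle$, which is precisely what remains unproven.

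\paragraph{Auxiliary steps and comparison with the paper.} Two auxiliary steps also fail as written.
\begin{itemize}
\item Swapping $\hat{\mathbf{v}}_t$ for $\hat{\mathbf{v}}_{t-1}$ is cheap in AMSGrad only because $\mathbf{v}$ is monotone and the correction telescopes. With Adam's exponential average and fixed $\beta_2$, the correction is $\alpha_t\langle\nabla J(\boldsymbol{\theta}_t),(D_t-D_{t-1})\mathbf{m}_t\rangle$ with $D_t=\mathrm{diag}(\sqrt{\hat{\mathbf{v}}_t}+\epsilon)^{-1}$. It is of order $\alpha_t(1-\beta_2)$ and generally has nonzero conditional mean. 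So it is not summable, and it cannot be absorbed into $c\,\alpha_t\|\nabla J\|^2$ when $\|\nabla J\|$ is small. Your lower bound $c_\ell$ also needs bounded gradients, which the Gaussian model does not provide.
\item Your second-moment bound is circular. The momentum averages the \emph{shrunk} gradients, so $\|\mathbf{g}^{S+}_t\|\le 2\|\mathbf{m}_{t-1}\|+\|\mathbf{g}_t\|$ only yields a recursion with growth factor $2-\beta_1>1$. Use instead $\|\boldsymbol{\delta}_t\|^2=\min\{D_n,(p-2)^2\sigma^4/D_n\}\le(p-2)\sigma^2$, which gives $\|\mathbf{g}^{S+}_t\|\le\|\mathbf{g}_t\|+\sqrt{p-2}\,\sigma$.
\end{itemize}

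The paper's proof confronts none of this. It writes the update as $\boldsymbol{\theta}_{t+1}=\boldsymbol{\theta}_t-\alpha_t\mathbf{g}^{S+}_t$, dropping momentum and preconditioning. It then asserts that $\mathbf{g}^{S+}_t-\nabla J(\boldsymbol{\theta}_t)$ is a martingale difference sequence and cites Robbins--Monro theory. By the display above that assertion is false, since the conditional bias is $(1-\kappa_t)(\mathbf{m}_{t-1}-\nabla J(\boldsymbol{\theta}_t))\neq\mathbf{0}$. You were right not to take that shortcut, but the paper supplies nothing that closes your gap.
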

Refer to Appendix \ref{sec:proof} for the proof. 

Note that in overparameterized networks, it is empirically observed that
\begin{equation}
\|\nabla J(\boldsymbol{\theta})\|^2
\ll
\mathbb{E}\|\boldsymbol{\varepsilon}_t\|^2.    
\end{equation}
In this regime, from \cite{efron2012}, the James--Stein estimator is asymptotically minimax and Bayes-optimal under an empirical Bayes interpretation.

As the final result, we touch the minimax optimality for gradient estimation. Conditioned on $\mathcal{F}_{t-1}$, consider estimation of
$\boldsymbol{\mu}_t := \nabla J(\boldsymbol{\theta}_t) \in \mathbb{R}^p$
from the observation $\mathbf{g}_t = \boldsymbol{\mu}_t + \boldsymbol{\varepsilon}_t$, $\boldsymbol{\varepsilon}_t \sim \mathcal{N}(\mathbf{0}, \sigma^2 I_p)$. Let $\mathcal{D}$ denote the class of all (possibly randomized) estimators measurable with respect to $\mathbf{g}_t$. The conditional minimax risk is defined as
\begin{equation*}
R^*=\inf_{\hat{\boldsymbol{\mu}} \in \mathcal{D}}\sup_{\boldsymbol{\mu} \in \mathbb{R}^p}
\mathbb{E}_{\boldsymbol{\mu}}\|\hat{\boldsymbol{\mu}} - \boldsymbol{\mu}\|^2.    
\end{equation*}
The following final result establishes the minimax optimality of Stein-rule gradient estimation.
\begin{theorem}\label{thm:minimax}
Assume $p \ge 3$ and the Gaussian noise assumption holds. Then:
\begin{enumerate}
\item[(i)] The minimax risk over $\mathbb{R}^p$ satisfies $R^* = p\sigma^2$.
\item[(ii)] The unrestricted estimator $\mathbf{g}_t$ is minimax but inadmissible.
\item[(iii)] The positive-part Stein estimator $\hat{\mathbf{g}}_t^{S+}$ is minimax and dominates $\mathbf{g}_t$ uniformly, i.e.
\begin{equation*}
\sup_{\boldsymbol{\mu}}\mathbb{E}_{\boldsymbol{\mu}}\|\hat{\mathbf{g}}_t^{S+}-\boldsymbol{\mu}\|^2\leq p\sigma^2,    
\end{equation*}
with strict inequality for all $\boldsymbol{\mu} \neq \mathbf{0}$.
\end{enumerate}
\end{theorem}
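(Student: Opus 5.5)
The plan is to work conditionally on $\mathcal{F}_{t-1}$ throughout. Under that conditioning, $\mathbf{m}_{t-1}$ is a fixed vector $\mathbf{c}\in\mathbb{R}^p$ and $\mathbf{g}_t\sim\mathcal{N}(\boldsymbol{\mu},\sigma^2 I_p)$. The statement is then the classical normal-means problem with a known shrinkage target. I would treat $\sigma^2$ as known, as in Eq.~(\ref{eq:pos_stein}), and not address the plug-in $\hat\sigma_t^2$ of Theorem~\ref{thm:variance_consistency}. Translating $\mathbf{x}=\mathbf{g}_t-\mathbf{c}$ and $\boldsymbol{\eta}=\boldsymbol{\mu}-\mathbf{c}$ reduces everything to shrinkage toward the origin, because squared-error risk is translation invariant and the sup over $\boldsymbol{\mu}\in\mathbb{R}^p$ equals the sup over $\boldsymbol{\eta}\in\mathbb{R}^p$.

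\textbf{Part (i).} The upper bound $R^*\le p\sigma^2$ is immediate from the constant risk $\mathbb{E}\|\mathbf{g}_t-\boldsymbol{\mu}\|^2=p\sigma^2$. For the lower bound I would use the usual sequence of Gaussian priors $\boldsymbol{\mu}\sim\mathcal{N}(\mathbf{0},\tau^2 I_p)$.
\begin{itemize}
\item The Bayes estimator is the posterior mean $\frac{\tau^2}{\tau^2+\sigma^2}\mathbf{g}_t$.
\item Its Bayes risk is $p\sigma^2\tau^2/(\tau^2+\sigma^2)$.
\item For any estimator in $\mathcal{D}$, including randomized ones (by convexity of the loss, or by averaging over the auxiliary randomization), the sup risk is at least the Bayes risk.
\end{itemize}
Letting $\tau\to\infty$ gives $R^*\ge p\sigma^2$.

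\textbf{Part (ii).} Minimaxity of $\mathbf{g}_t$ follows at once, since its constant risk equals $R^*$. For inadmissibility I would compute the risk of the plain Stein rule $\mathbf{x}-(p-2)\sigma^2\mathbf{x}/\|\mathbf{x}\|^2$ using Stein's identity $\mathbb{E}[(\mathbf{x}-\boldsymbol{\eta})^\top h(\mathbf{x})]=\sigma^2\,\mathbb{E}[\nabla\!\cdot h(\mathbf{x})]$. Applied with $h(\mathbf{x})=\mathbf{x}/\|\mathbf{x}\|^2$, whose divergence is $(p-2)/\|\mathbf{x}\|^2$, this gives
\[
R(\boldsymbol{\eta})=p\sigma^2-(p-2)^2\sigma^4\,\mathbb{E}_{\boldsymbol{\eta}}\bigl[\|\mathbf{x}\|^{-2}\bigr].
\]
The expectation $\mathbb{E}\|\mathbf{x}\|^{-2}$ is finite and positive for $p\ge3$, since $\|\mathbf{x}\|^2/\sigma^2$ is noncentral $\chi^2_p$. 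Hence $R(\boldsymbol{\eta})<p\sigma^2$ for every $\boldsymbol{\eta}$. Stein's identity applies here because $h$ is weakly differentiable and $\mathbb{E}|\partial_i h_i|<\infty$ for $p\ge3$; I would check this explicitly. This step also supplies the argument behind Theorem~\ref{thm:stein_risk}.

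\textbf{Part (iii).} I would show that the positive-part rule dominates the plain Stein rule, via Baranchik's argument. Write the estimator as $(1-\phi(\|\mathbf{x}\|^2))\mathbf{x}$, and let $A=\{\|\mathbf{x}\|^2<(p-2)\sigma^2\}$.
\begin{itemize}
\item The two rules differ only on $A$.
\item The risk difference reduces to $\mathbb{E}\bigl[(\phi-1)^2\|\mathbf{x}\|^2\mathbf{1}_A\bigr]-2\,\mathbb{E}\bigl[(\phi-1)\,\boldsymbol{\eta}^\top\mathbf{x}\,\mathbf{1}_A\bigr]$.
\item The first term is positive.
\item The second term is handled by conditioning on $\|\mathbf{x}\|$. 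Given $\|\mathbf{x}\|$, $\mathbb{E}[\boldsymbol{\eta}^\top\mathbf{x}\mid\|\mathbf{x}\|]\ge0$ by the monotone likelihood ratio and reflection symmetry of the normal density in the direction of $\boldsymbol{\eta}$.
\end{itemize}
Hence the positive-part risk is at most the Stein-rule risk, which is strictly below $p\sigma^2$. This gives the uniform bound, the strict inequality at every $\boldsymbol{\eta}$ (in particular for all $\boldsymbol{\mu}\ne\mathbf{0}$ as stated), and therefore minimaxity by part (i).

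\textbf{Main obstacle.} I expect the conditional sign argument for $\boldsymbol{\eta}^\top\mathbf{x}$ given $\|\mathbf{x}\|$ to be the delicate step. Rigor with randomized estimators in part (i) is a secondary, more routine concern. For the sign argument I would first decompose $\mathbf{x}$ along $\boldsymbol{\eta}/\|\boldsymbol{\eta}\|$ and use that the conditional density of that coordinate given $\|\mathbf{x}\|$ is proportional to $e^{\|\boldsymbol{\eta}\|u/\sigma^2}$ times a symmetric function. This makes the conditional mean nonnegative.
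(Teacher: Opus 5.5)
Your proposal is correct and has the same skeleton as the paper's proof. Both use the $\mathcal{N}(\mathbf{0},\tau^2 I_p)$ prior sequence for $R^*\ge p\sigma^2$, the constant risk $p\sigma^2$ of $\mathbf{g}_t$ for minimaxity, and domination by a Stein-type rule for inadmissibility and for minimaxity of $\hat{\mathbf{g}}_t^{S+}$.

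The difference is how the domination is obtained. The paper gets it by citation: part (iii) invokes Theorem~\ref{thm:stein_risk}, whose proof itself appeals to Saleh and James--Stein, and strictness is referred to Baranchik (1970). You prove it in two explicit steps:
\begin{enumerate}
\item Stein's identity gives the plain James--Stein risk $p\sigma^2-(p-2)^2\sigma^4\,\mathbb{E}\|\mathbf{x}\|^{-2}$.
\item Conditioning on $\|\mathbf{x}\|$ shows the positive-part rule beats the plain rule.
\end{enumerate}
This makes the argument self-contained. It also shows the risk is strictly below $p\sigma^2$ at every $\boldsymbol{\eta}=\boldsymbol{\mu}-\mathbf{m}_{t-1}$, including $\boldsymbol{\eta}=\mathbf{0}$. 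So the restriction $\boldsymbol{\mu}\neq\mathbf{0}$ in the statement is superfluous; in any case the risk depends on $\boldsymbol{\mu}$ only through $\boldsymbol{\mu}-\mathbf{m}_{t-1}$. You also state the upper bound $R^*\le p\sigma^2$ explicitly, which the paper leaves implicit in part (ii).

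One sign needs fixing in part (iii). On $A$ we have $\phi>1$, so the plain rule equals $-(\phi-1)\mathbf{x}$, and therefore
\[
R(\text{plain})-R(\text{positive part})=\mathbb{E}\bigl[(\phi-1)^2\|\mathbf{x}\|^2\mathbf{1}_A\bigr]+2\,\mathbb{E}\bigl[(\phi-1)\,\boldsymbol{\eta}^\top\mathbf{x}\,\mathbf{1}_A\bigr].
\]
The cross term carries a plus, not a minus. With the minus you wrote, your (correct) fact $\mathbb{E}[\boldsymbol{\eta}^\top\mathbf{x}\mid\|\mathbf{x}\|]\ge0$ would make the second term nonpositive, and the comparison would not close. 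With the correct sign both terms are nonnegative. The first is strictly positive because $\mathbb{P}(A)>0$, which gives the strict domination you need.
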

For the proof, refer to Appendix \ref{sec:proof}. 

Theorem~\ref{thm:minimax} implies that, in the noise-dominated regime characteristic of overparameterized deep networks, the Stein-rule gradient estimator is not only risk-improving but \emph{decision-theoretically optimal}. Consequently, SR-Adam constitutes a minimax-optimal stochastic gradient method under squared error loss, providing a rigorous statistical justification for adaptive shrinkage in deep learning optimization.

\section{The SR-Adam Algorithm}
We formally present the Stein-Rule Adam (SR-Adam) algorithm, an adaptive stochastic gradient method that leverages a principled statistical correction to improve gradient estimation via the Stein rule. This algorithm integrates adaptive moment estimation --similar to Adam-- with a decision-theoretically optimal gradient shrinkage mechanism derived from Stein's identity.
The SR-Adam algorithm operates iteratively over parameter vectors $\bm{\theta}_t \in \mathbb{R}^p$, where at each step, it computes an improved gradient estimate that accounts for the uncertainty in stochastic gradients due to noise and bias. The full pseudocode is given in Algorithm \ref{alg:sr-adam}.


\begin{algorithm}[t]
\caption{SR-Adam: Stein-Rule Adaptive Moment Estimation}
\label{alg:sr-adam}
\begin{algorithmic}[1]
\Require $\alpha$: Learning rate,
\Statex \hspace{2em}  $\beta_1, \beta_2 \in [0, 1)$: Exponential decay rates for first and second moments,
\Statex \hspace{2em}  $\bm{\theta}_0$: Initial parameter vector,
\Statex \hspace{2em} $\tau > 0$ (typically small): Number of warm-up steps before applying Stein shrinkage
\State  $\mathbf{m}_0 \gets \mathbf{0}, \mathbf{v}_0 \gets \mathbf{0}$  
\State $t \gets 0$  
\While{$\bm{\theta}_t$ not converged}
    \State $t \gets t + 1$  
    \State Get gradient $\mathbf{g}_t \gets \nabla_{\bm{\theta}} f_t(\bm{\theta}_{t-1})$, where $f_t$ is the loss function evaluated on a mini-batch of data
    
    \If{$t > \tau$}
        \State Compute noise variance estimate: $\hat{\sigma}^2 \gets \text{mean}\left( \mathbf{v}_{t-1} - \mathbf{m}_{t-1}^2 \right)$
        \State Compute divergence from previous momentum: $D_n \gets \| \mathbf{g}_t - \mathbf{m}_{t-1} \|^2$
        \State Compute shrinkage factor: $c_t \gets \max\left(0, 1 - \frac{(p-2)\hat{\sigma}^2}{D_n}\right)$
        \State Apply Stein Correction: $\hat{\mathbf{g}}_t \gets \mathbf{m}_{t-1} + c_t (\mathbf{g}_t - \mathbf{m}_{t-1})$
    \Else
        \State $\hat{\mathbf{g}}_t \gets \mathbf{g}_t$
    \EndIf
    
    \State Update moment estimates:  $\mathbf{m}_t \gets \beta_1 \mathbf{m}_{t-1} + (1 - \beta_1) \hat{\mathbf{g}}_t$
    \Statex \hspace{13.4em} $\mathbf{v}_t \gets \beta_2 \mathbf{v}_{t-1} + (1 - \beta_2) \hat{\mathbf{g}}_t^2$
    \State Update parameters:  
    $
    \bm{\theta}_t \leftarrow \bm{\theta}_{t-1} - \alpha \cdot \frac{{\mathbf{m}}_t}{\sqrt{{\mathbf{v}}_t} + \epsilon}
    $\EndWhile
\end{algorithmic}
\end{algorithm}

\noindent The key innovation of SR-Adam lies in the Stein correction step, which applies a dimension-dependent shrinkage factor $c_t$ to reduce the bias introduced by noisy gradient estimates. This correction is grounded in statistical decision theory: under a noise-dominated regime --typical of overparameterized deep networks-- the Stein-rule estimator dominates all other unbiased estimators in terms of mean-squared error and is minimax optimal under squared loss \citep{mandt2017}. 

Specifically, when the gradient estimate $\mathbf{g}_t$ is corrupted by additive Gaussian noise with variance $\sigma^2$, the Stein rule provides a decision-theoretically optimal estimator that shrinks toward the previous momentum $\mathbf{m}_{t-1}$, effectively reducing both variance and bias. The shrinkage factor $c_t$ is derived from the ratio of estimated noise variance to the observed divergence between current and past gradients, ensuring stability and adaptivity.

Furthermore, the use of Adam-style moment estimates allows for adaptive scaling of learning rates across dimensions, while the Stein correction ensures that this adaptation remains statistically sound; even in high-dimensional settings where standard gradient estimators may exhibit poor generalization due to noise amplification. 

This formulation provides a rigorous justification for adaptive shrinkage in deep learning optimization and extends the theoretical understanding of how stochastic gradients behave as high-dimensional statistical estimators.

\section{Experimental Results}
We present a controlled empirical study designed to isolate the optimizer's effect while keeping the rest of the training pipeline fixed. Our experiments span the CIFAR10 and CIFAR100 datasets under three levels of input noise, corresponding to Gaussian standard deviations of 0.0, 0.05, and 0.1. Unless otherwise stated, the optimizer is the only varying component; the backbone, preprocessing, number of epochs, batch size, and evaluation metrics are held constant to ensure a fair comparison.
The complete implementation is available at: \url{https://github.com/mamintoosi-papers-codes/SR-Adam}.

\subsection{Implementation Details of SR-Adam}
In practice, we implement SR-Adam with several enhancements to ensure numerical stability, adaptive responsiveness, and robust performance across different training regimes. These design choices are motivated by both statistical considerations --such as the high-dimensional noise dominance in deep learning-- and empirical observations from our ablation studies.
Specifically:
\begin{itemize}
    
    \item The algorithm operates on a per-parameter-group basis, enabling heterogeneous behavior across different layers. For instance, Stein-rule shrinkage is selectively applied only to convolutional layer weights --where gradient estimates are highly dimensional and susceptible to noise-- while standard Adam updates are used for fully connected layers and biases. This design choice is empirically validated in our ablation study (Section~\ref{sec:ablation-shrinkage-scope}) and aligns with the observed benefits of Stein correction in high-dimensional settings.
    
    \item Gradients are whitened using the second-order moment estimate $\sqrt{\mathbf{v}_{t-1}^{(g)} + \epsilon}$ prior to computing the Stein correction. This ensures dimension-adaptive scaling, preventing over-shrinkage or under-reaction due to large gradient variances in deep layers.
    
    \item The shrinkage factor $c_t^{(g)}$ is clipped within the range $[0.1, 1]$ to prevent pathological behavior when noise variance is low. This safeguards against over-shrinkage and maintains stable convergence, especially during later training stages.
    
    \item Weight decay is applied in the gradient update step, consistent with standard optimization practices and aligned with our experimental findings on generalization performance.
    
\end{itemize}

\subsection{Model Architecture and SR-Adam Application}
We employ a lightweight SimpleCNN backbone for all experiments to focus on the optimizer's behavior without architectural confounds. SR-Adam selectively applies the Stein-rule shrinkage component only to the weights of the convolutional layers, while standard Adam updates are used for all other parameters, including fully-connected layers and biases. This strategy is motivated by the high-dimensional nature of convolutional gradients, where Stein-type estimators offer the most benefit by reducing noise, while the lower-dimensional fully-connected layers are better served by the direct adaptive updates of standard Adam. We empirically validate this design choice in our ablation study (Section \ref{sec:ablation-shrinkage-scope}), where applying shrinkage to all layers did not yield additional performance gains.

Table \ref{tab:simplecnn_arch_and_sradam} details the complete architecture of the SimpleCNN model and explicitly indicates where the SR-Shrinkage is applied. The model consists of two convolutional blocks followed by two fully connected layers, totaling 545,098 parameters.

\begin{table}[t]
\centering
\caption{SimpleCNN architecture and SR-Adam application strategy. The SR-Shrinkage component is applied exclusively to the weights of the convolutional layers.}
\label{tab:simplecnn_arch_and_sradam}
\begin{tabular}{l c c c c}
\toprule
Layer & Kernel/Units & Output Shape & Parameters & SR-Shrinkage \\
\midrule
Conv2d-1 & 3 → 32, 3×3 & 32×32×32 & 896 & \textbf{Yes} \\
ReLU-1 & — & 32×32×32 & 0 & No \\
MaxPool2d-1 & 2×2, stride=2 & 32×16×16 & 0 & No \\
Conv2d-2 & 32 → 64, 3×3 & 64×16×16 & 18,496 & \textbf{Yes} \\
ReLU-2 & — & 64×16×16 & 0 & No \\
MaxPool2d-2 & 2×2, stride=2 & 64×8×8 & 0 & No \\
\midrule
Flatten & — & 4,096 & 0 & No \\
Linear-1 & 4,096 → 128 & 128 & 524,416 & No \\
ReLU-3 & — & 128 & 0 & No \\
Dropout (0.2) & — & 128 & 0 & No \\
Linear-2 & 128 → 10 & 10 & 1,290 & No \\
\midrule
\textbf{Total} & — & — & \textbf{545,098} & — \\
\bottomrule
\end{tabular}
\end{table}

\subsection{Experimental Protocol}
We evaluate SR-Adam against SGD, Momentum, and Adam on CIFAR10 and CIFAR100 using the SimpleCNN backbone described above. 
We conduct 5 independent runs with different random seeds to ensure statistical robustness. We report the mean and standard deviation of the best test accuracy achieved across the 20 training epochs. In our result tables, the best performing entry per column is bolded for clarity.

\subsection{Computational Cost}
\label{sec:computational-cost}
All experiments were conducted on a single desktop workstation equipped with an Intel Core i3-9100 CPU, 16 GB of system RAM, and an NVIDIA GeForce RTX 3090 GPU (24 GB VRAM). The implementation was in PyTorch. A complete multi-run grid over both CIFAR-10 and CIFAR-100 --5 optimizers (including one method mentioned in ablation study), 3 input noise levels, batch size 512, 5 independent seeds, 20 epochs each-- concluded in approximately 1,450 minutes (24 hours). 

As shown in Table \ref{tab:train_time_cifar10_bs512}, the per-epoch runtime was virtually identical across optimizers. SR-Adam incurs a negligible overhead of only 0.7\% relative to Adam, confirming that the extra spectral calculations do not materially affect training throughput.

\begin{table}[t]
\centering
\caption{Training time breakdown on CIFAR-10 with batch size 512 (20 epochs, 5 runs, mean $\pm$ std).}
\label{tab:train_time_cifar10_bs512}
\begin{tabular}{lcc}
\toprule
Optimizer & Mean Epoch Time (s) & Mean Total Time (s) \\
\midrule
SGD & 29.32 $\pm$ 0.06 & 586.4 $\pm$ 1.2 \\
Momentum & 29.38 $\pm$ 0.03 & 587.7 $\pm$ 0.5 \\
Adam & 29.51 $\pm$ 0.03 & 590.1 $\pm$ 0.5 \\
SR-Adam & 29.70 $\pm$ 0.03 & 594.0 $\pm$ 0.7 \\
\bottomrule
\end{tabular}
\end{table}

\subsection{Results and Analysis}
As mentioned above, we evaluate SR-Adam against some baseline methods, on two famous dataset using a Simple CNN backbone. To probe robustness to input noise, we add Gaussian noise at three standard deviation levels: 0.0, 0.05, and 0.1. For each dataset/noise combination, we conduct 5 independent runs with different random seeds and report mean $\pm$ std across runs. Accuracy is reported as a percentage (higher is better); loss is minimized (lower is better). In method comparison tables (\ref{tab:method_best_acc} and \ref{tab:method_best_loss}), we bold the best entry per dataset/noise column according to the respective metric direction.

In addition to the tabular comparison, we provide a visual summary of the best test accuracy achieved by each optimizer under different noise levels. Figures~\ref{fig:cifar10_bar} and~\ref{fig:cifar100_bar} illustrate bar charts for CIFAR10 and CIFAR100, respectively, where each subplot corresponds to a specific noise level (0.0, 0.05, and 0.1). Bars represent the mean best test accuracy across five independent runs, with error bars indicating one standard deviation.

\begin{figure*}[t]
    \centering
    \includegraphics[width=0.9\textwidth]{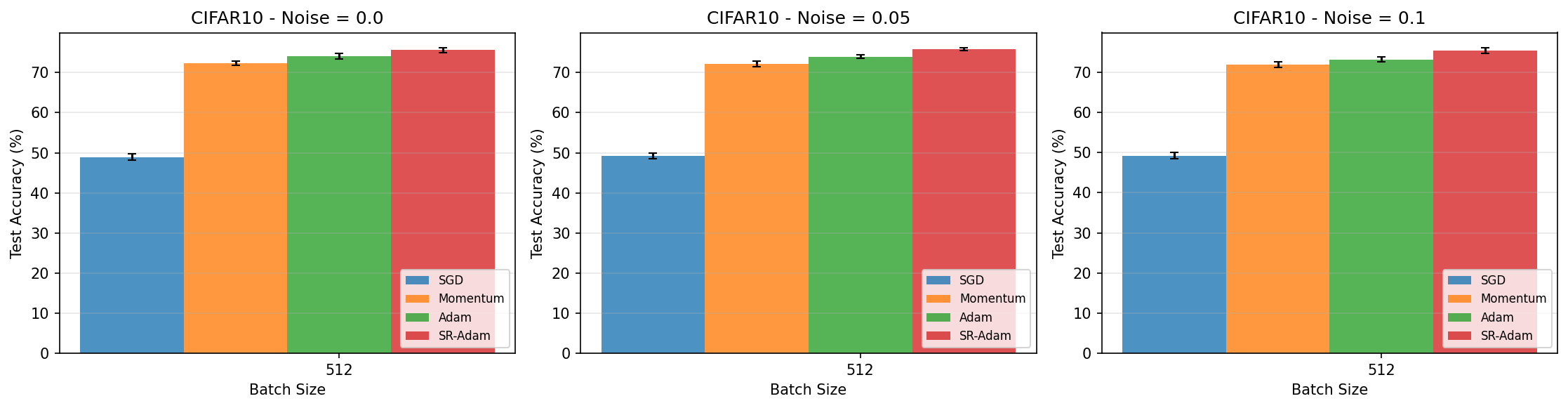}
    \caption{Bar chart comparison of best test accuracy on CIFAR10 under different input noise levels (std: 0.0, 0.05, 0.1). Each subplot corresponds to one noise level. Bars show mean accuracy over five independent runs, and error bars indicate one standard deviation.}
    \label{fig:cifar10_bar}
\end{figure*}

\begin{figure*}[t]
    \centering
    \includegraphics[width=0.9\textwidth]{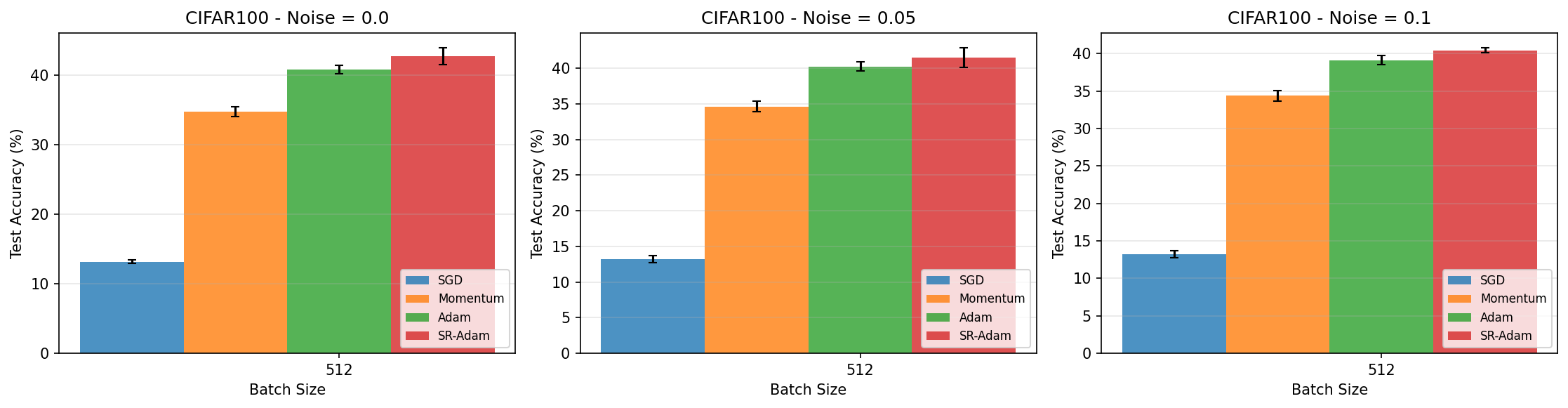}
    \caption{Bar chart comparison of best test accuracy on CIFAR100 under different input noise levels (std: 0.0, 0.05, 0.1). Despite the increased task difficulty, SR-Adam consistently outperforms baseline optimizers, particularly in noisy regimes.}
    \label{fig:cifar100_bar}
\end{figure*}

These visualizations complement Table~\ref{tab:method_best_acc} by highlighting consistent performance trends across noise regimes. In particular, SR-Adam achieves the highest mean accuracy in all noise settings for both datasets, with the performance gap becoming more pronounced as the level of input noise increases. This trend supports the hypothesis that Stein-rule shrinkage is especially effective in stabilizing noisy gradient estimates under corrupted supervision.

\paragraph{Fairness and Reproducibility}
All comparisons use an identical training protocol and data processing across methods; only the optimizer changes. We hold constant the backbone, number of epochs, batch size, input noise injection, evaluation procedure, and seed schedule (five seeds per configuration), while using standard, fixed hyperparameters per optimizer throughout. To avoid discrepancies due to third-party implementations, all optimizers are implemented within a single, consistent codebase with matching interfaces, and the full executable code is publicly available from the paper repository%
\footnote{\url{https://github.com/mamintoosi-papers-codes/SR-Adam}}.

\subsubsection{Per-Epoch Behavior}
Figures \ref{fig:cifar10_acc}--\ref{fig:cifar100_loss} display test accuracy and loss across epochs, stratified by noise level. Each figure shows three panels corresponding to noise levels 0.0, 0.05, and 0.1. All four methods (SGD, Momentum, Adam, SR-Adam) are plotted with mean $\pm$ std bands.


\begin{figure}[t]
  \centering
  \begin{tabular}{ccc}
    noise=0.0 & noise=0.05 & noise=0.1 \\
    \includegraphics[width=0.3\linewidth]{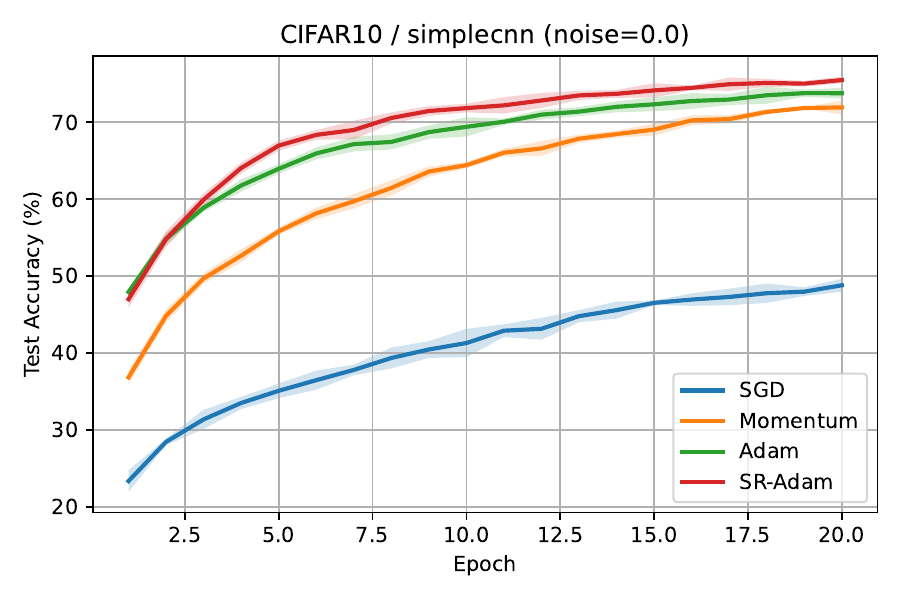} &
    \includegraphics[width=0.3\linewidth]{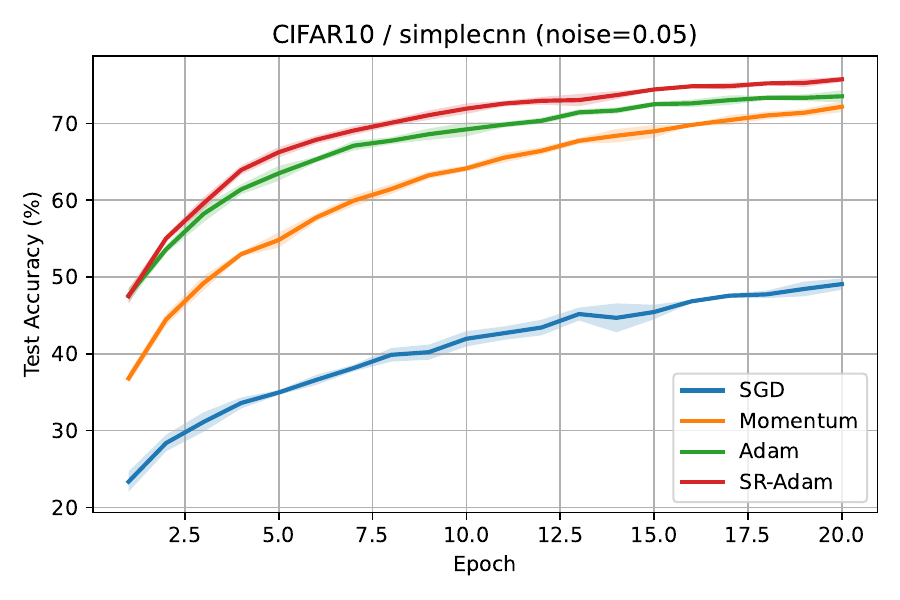} &
    \includegraphics[width=0.3\linewidth]{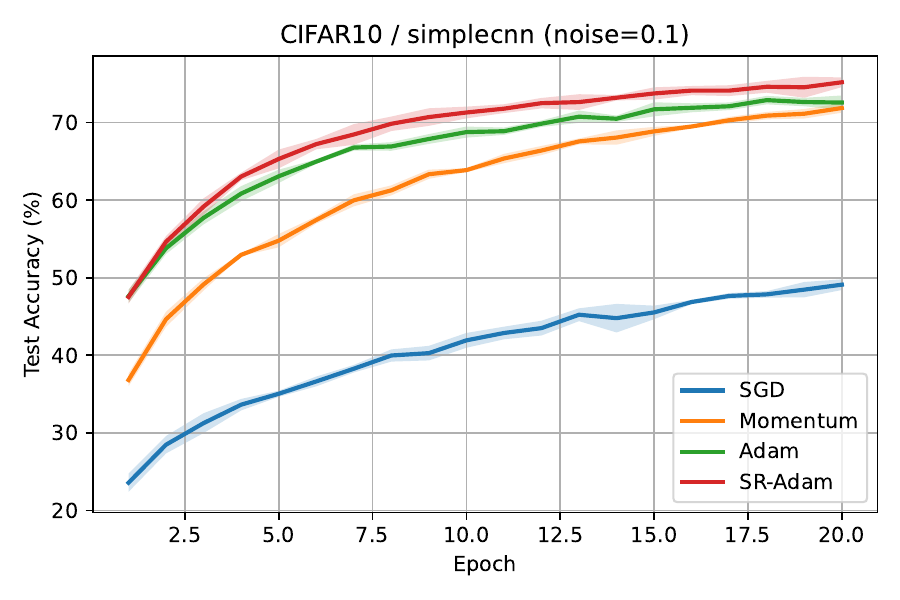} \\
  \end{tabular}
  \caption{SimpleCNN on CIFAR10: test accuracy vs.\ epoch across noise levels. Mean $\pm$ std over runs.}
  \label{fig:cifar10_acc}
\end{figure}

\begin{figure}[t]
  \centering
  \begin{tabular}{ccc}
    noise=0.0 & noise=0.05 & noise=0.1 \\
    \includegraphics[width=0.3\linewidth]{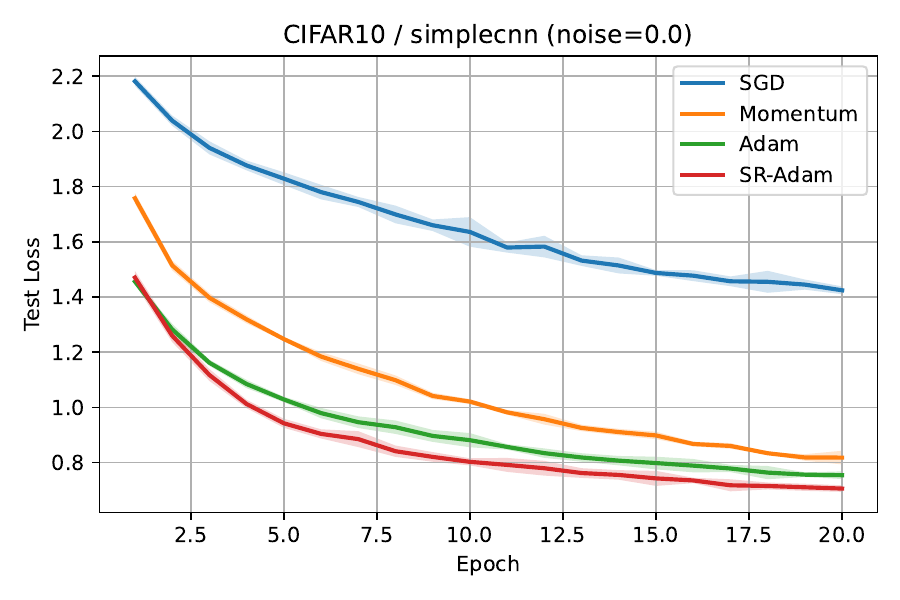} &
    \includegraphics[width=0.3\linewidth]{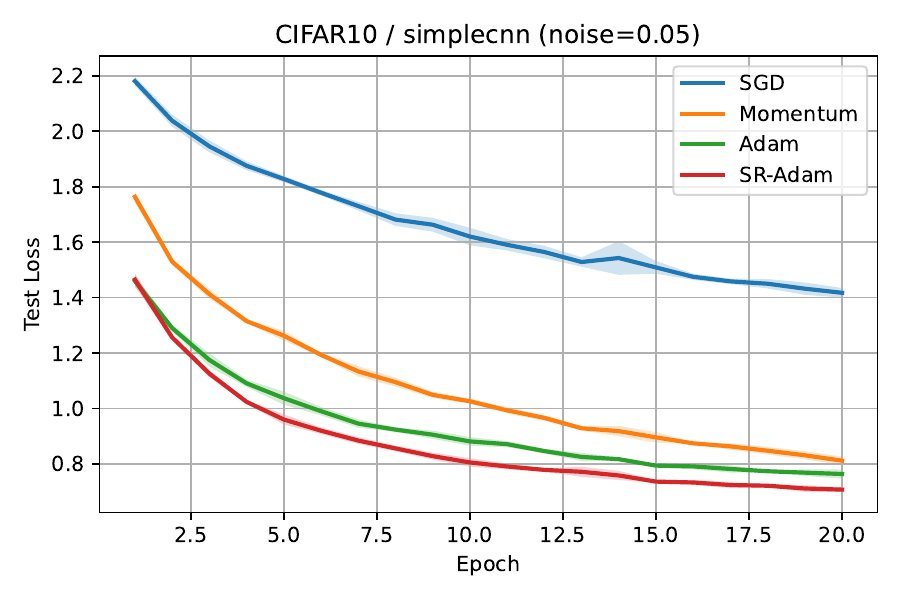} &
    \includegraphics[width=0.3\linewidth]{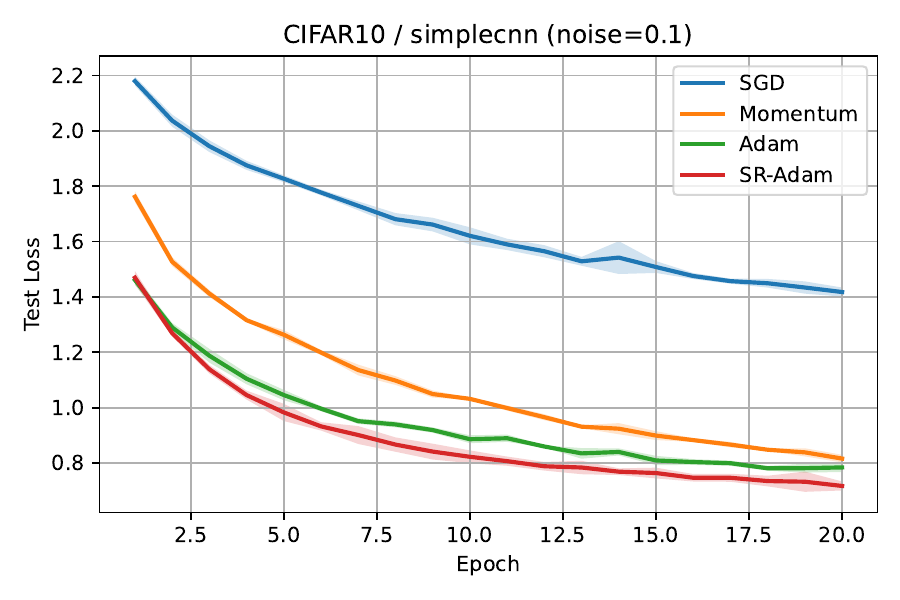} \\
  \end{tabular}
  \caption{SimpleCNN on CIFAR10: test loss vs.\ epoch across noise levels. Mean $\pm$ std over runs.}
  \label{fig:cifar10_loss}
\end{figure}

\begin{figure}[t]
  \centering
  \begin{tabular}{ccc}
    noise=0.0 & noise=0.05 & noise=0.1 \\
    \includegraphics[width=0.3\linewidth]{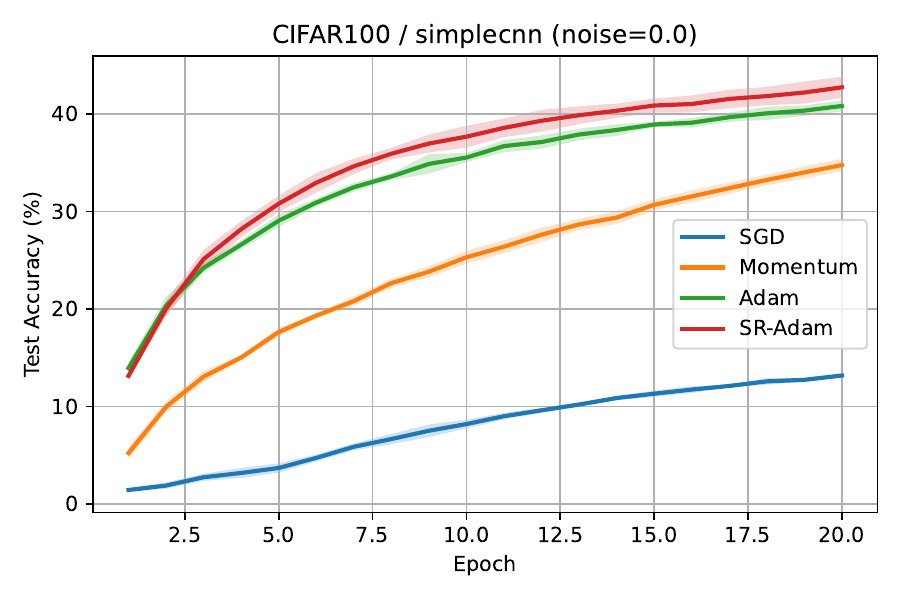} &
    \includegraphics[width=0.3\linewidth]{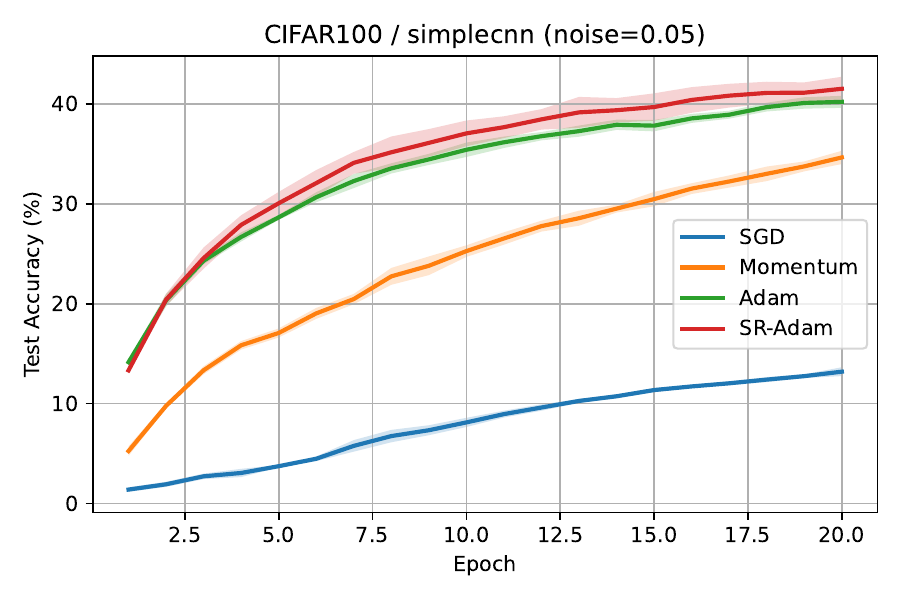} &
    \includegraphics[width=0.3\linewidth]{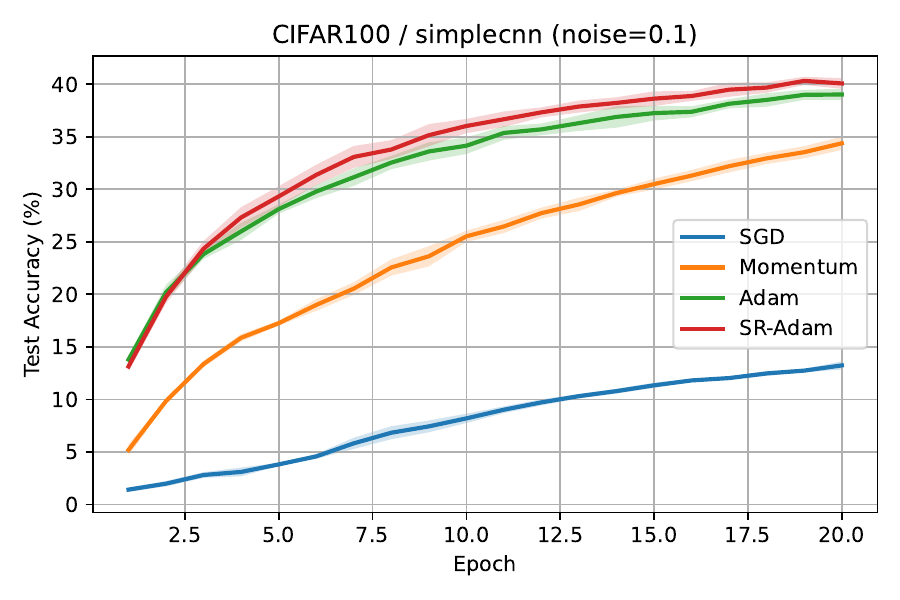} \\
  \end{tabular}
  \caption{SimpleCNN on CIFAR100: test accuracy vs.\ epoch across noise levels. Mean $\pm$ std over runs.}
  \label{fig:cifar100_acc}
\end{figure}

\begin{figure}[t]
  \centering
  \begin{tabular}{ccc}
    noise=0.0 & noise=0.05 & noise=0.1 \\
    \includegraphics[width=0.3\linewidth]{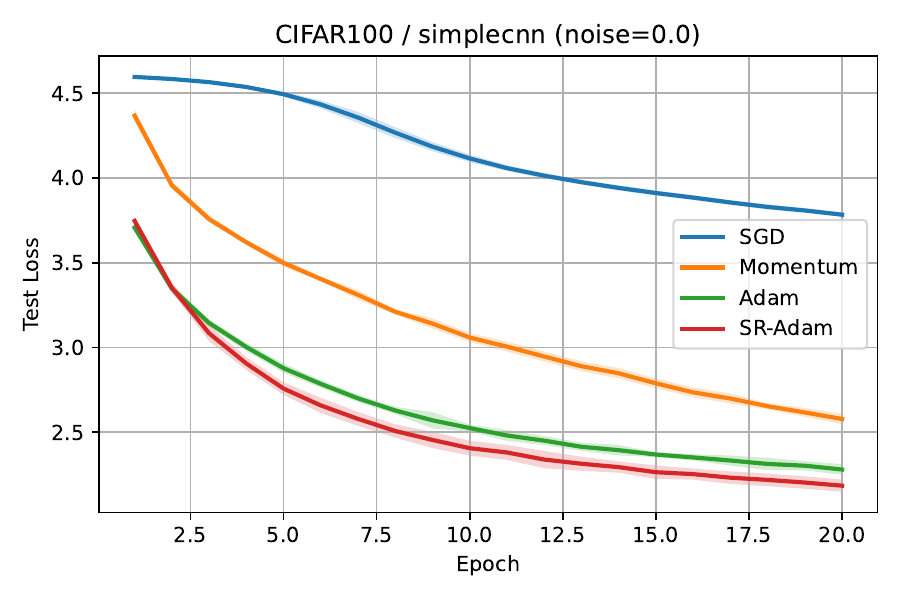} &
    \includegraphics[width=0.3\linewidth]{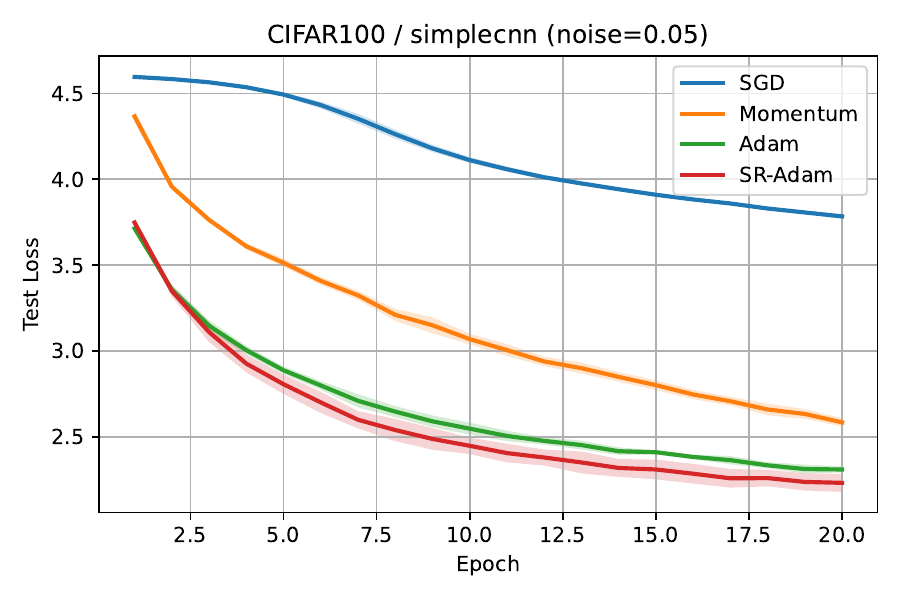} &
    \includegraphics[width=0.3\linewidth]{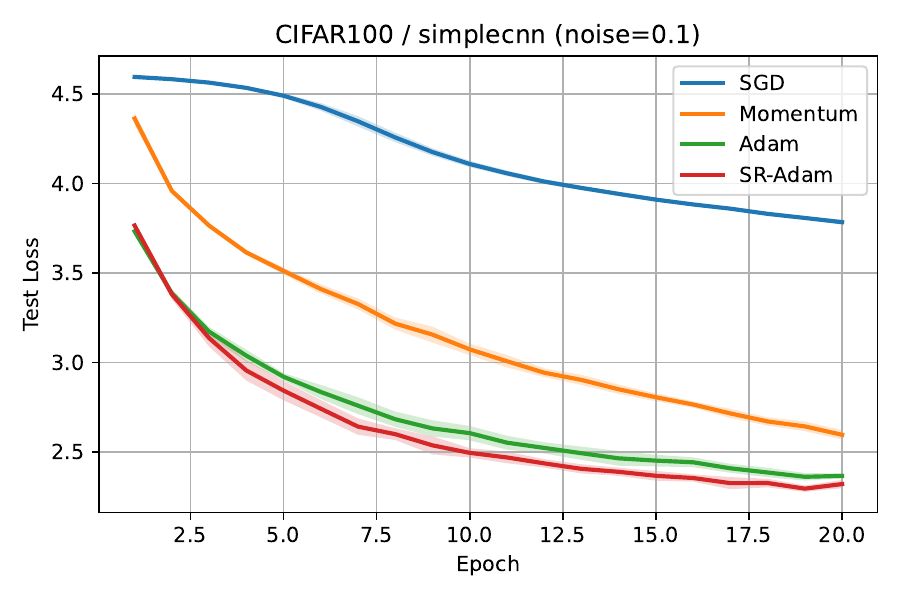} \\
  \end{tabular}
  \caption{SimpleCNN on CIFAR100: test loss vs.\ epoch across noise levels. Mean $\pm$ std over runs.}
  \label{fig:cifar100_loss}
\end{figure}

\subsubsection{Aggregated Metrics}
Tables \ref{tab:method_best_acc} and \ref{tab:method_best_loss} report the best accuracy and loss achieved over all epochs; 
Each table rows are methods and columns are dataset×noise combinations. The mean $\pm$ std are computed across the 5 runs; bolded entries highlight the best-performing method per column. SR-Adam consistently achieves competitive or superior performance, particularly in high-noise regimes (0.05, 0.1), demonstrating the benefit of dynamic shrinkage on noisy gradients.


\begin{table}[t]
  \centering
  \scalebox{0.79}{
  \begin{tabular}{l ccc ccc}
    \toprule
     & \multicolumn{3}{c}{CIFAR10} & \multicolumn{3}{c}{CIFAR100} \\
     \cmidrule(lr){2-4} \cmidrule(lr){5-7}
    Method & 0.0 & 0.05 & 0.1 & 0.0 & 0.05 & 0.1 \\
    \midrule
    SGD & 48.95 $\pm$ 0.83 & 49.25 $\pm$ 0.73 & 49.24 $\pm$ 0.76 & 13.17 $\pm$ {0.25} & 13.20 $\pm$ {0.49} & 13.23 $\pm$ 0.44 \\
    Momentum & 72.31 $\pm$ {0.52} & 72.22 $\pm$ 0.73 & 71.89 $\pm$ 0.68 & 34.77 $\pm$ 0.72 & 34.63 $\pm$ 0.74 & 34.38 $\pm$ 0.71 \\
    Adam & 74.12 $\pm$ 0.67 & 73.95 $\pm$ 0.44 & 73.20 $\pm$ {0.56} & 40.85 $\pm$ 0.62 & 40.25 $\pm$ 0.67 & 39.14 $\pm$ 0.61 \\
    SR-Adam & \textbf{75.59 $\pm$ 0.56} & \textbf{75.84 $\pm$ 0.31} & \textbf{75.37 $\pm$ 0.69} & \textbf{42.74 $\pm$ 1.21} & \textbf{41.50 $\pm$ 1.34} & \textbf{40.43 $\pm$ 0.33} \\
    \bottomrule
  \end{tabular}
  }
  \caption{Best test accuracy (mean $\pm$ std) over epochs; higher is better.}
  \label{tab:method_best_acc}
\end{table}


\begin{table}[t]
  \centering
    \scalebox{0.83}{
  \begin{tabular}{l ccc ccc}
    \toprule
     & \multicolumn{3}{c}{CIFAR10} & \multicolumn{3}{c}{CIFAR100} \\
     \cmidrule(lr){2-4} \cmidrule(lr){5-7}
    Method & 0.0 & 0.05 & 0.1 & 0.0 & 0.05 & 0.1 \\
    \midrule
    SGD & 1.42 $\pm$ 0.02 & 1.41 $\pm$ 0.01 & 1.41 $\pm$ 0.01 & 3.78 $\pm$ 0.01 & 3.78 $\pm$ 0.01 & 3.78 $\pm$ 0.02 \\
    Momentum & 0.81 $\pm$ 0.02 & 0.81 $\pm$ 0.02 & 0.82 $\pm$ 0.02 & 2.58 $\pm$ 0.03 & 2.58 $\pm$ 0.03 & 2.59 $\pm$ 0.03 \\
    Adam & 0.75 $\pm$ 0.01 & 0.76 $\pm$ 0.01 & 0.77 $\pm$ 0.01 & 2.28 $\pm$ 0.04 & 2.30 $\pm$ 0.03 & 2.36 $\pm$ 0.02 \\
    SR-Adam & \textbf{0.70 $\pm$ 0.02} & \textbf{0.70 $\pm$ 0.01} & \textbf{0.71 $\pm$ 0.02} & \textbf{2.18 $\pm$ 0.04} & \textbf{2.23 $\pm$ 0.06} & \textbf{2.29 $\pm$ 0.02} \\
    \bottomrule
  \end{tabular}
  }
  \caption{Best test loss (mean $\pm$ std) over epochs; lower is better.}
  \label{tab:method_best_loss}
\end{table}


\subsubsection{Qualitative Analysis: Prediction Visualization}
To complement the quantitative metrics, we present a visual comparison of model predictions at noise level 0.05. Figures \ref{fig:qualitative_cifar10} and \ref{fig:qualitative_cifar100} display sample test images along with predictions from both Adam and SR-Adam using their respective best-performing checkpoints (highest test accuracy across 5 runs).

Each figure shows 10 randomly sampled test images arranged in columns. The top row displays the ground truth label, while the middle and bottom rows show Adam and SR-Adam predictions with their confidence scores. Correct predictions are marked in green; incorrect ones in red. The accuracies shown in parentheses correspond to the single best run selected for visualization.

\begin{figure*}[t]
    \centering
    \includegraphics[width=0.99\textwidth]{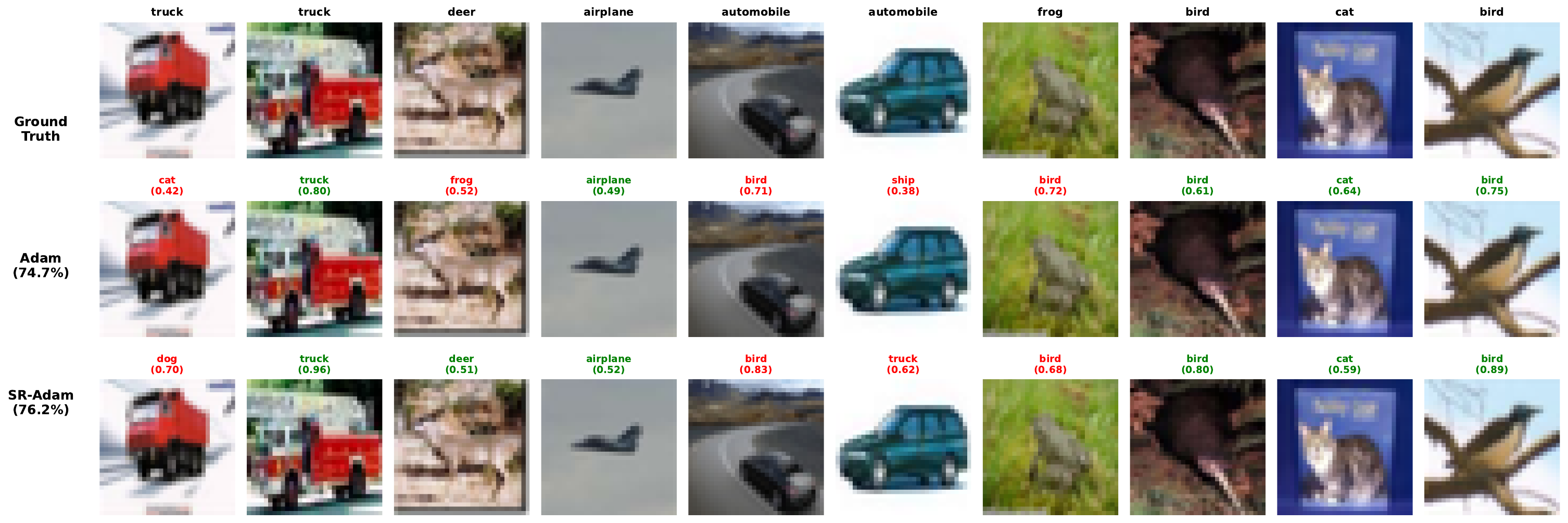}
    \caption{Qualitative comparison on CIFAR10 with Qualitative comparison on CIFAR10 with an input noise standard deviation of 0.05. Each column shows a test image with its true label (top), Adam prediction (middle), and SR-Adam prediction (bottom). Predictions include class name and confidence score. Green indicates correct classification; red indicates misclassification. The accuracies in parentheses (74.7\% for Adam, 76.2\% for SR-Adam) represent the best single run out of 5 independent runs, selected for visualization clarity. Note that these values differ from the aggregated statistics in Table \ref{tab:method_best_acc} (73.95$\pm$0.44\% for Adam, 75.84$\pm$0.31\% for SR-Adam), which report mean $\pm$ std across all runs. SR-Adam demonstrates more confident and accurate predictions on challenging samples, particularly on visually similar classes like cats/dogs and automobiles/trucks.}
    \label{fig:qualitative_cifar10}
\end{figure*}

\begin{figure*}[t]
    \centering
    \includegraphics[width=0.99\textwidth]{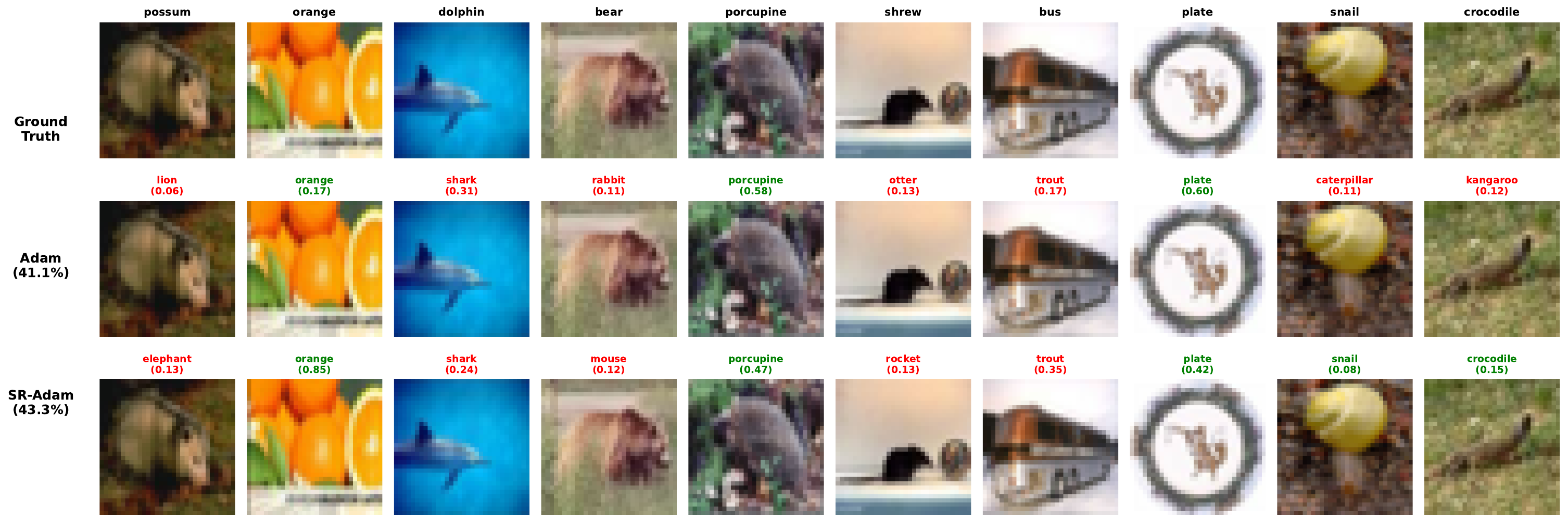}
    \caption{Qualitative comparison on CIFAR100 with an input noise standard deviation of 0.05. Similar to Figure \ref{fig:qualitative_cifar10}, this visualization shows sample predictions from the best-performing checkpoints. CIFAR100's 100-class taxonomy presents a significantly harder recognition task than CIFAR10. The fine-grained categories (e.g., distinguishing between different tree species or vehicle types) require more nuanced feature learning. SR-Adam's shrinkage mechanism helps stabilize gradient estimates in this high-noise, high-complexity regime, leading to more robust decision boundaries. The confidence scores reveal that SR-Adam produces higher-confidence predictions on correctly classified samples, suggesting improved calibration and reduced uncertainty.}
    \label{fig:qualitative_cifar100}
\end{figure*}

These visualizations reveal several key insights: (1) SR-Adam tends to produce higher confidence scores on correctly classified samples, suggesting better calibration; (2) on misclassified samples, both methods often confuse visually similar classes (e.g., cats vs dogs, trucks vs automobiles), but SR-Adam demonstrates slightly better discrimination; (3) the shrinkage mechanism appears particularly beneficial for challenging, ambiguous samples where the raw gradient signal is noisy. While the quantitative improvement is modest (approximately 2 percentage points), the qualitative analysis reveals that SR-Adam's adaptive shrinkage yields more confident and stable predictions in practice.

\subsection{Statistical Significance Analysis}
\label{sec:stats}

Table~\ref{tab:adam_vs_sradam} summarises paired-sample $t$-tests ($n\!=\!5$ runs) comparing the \emph{best} test accuracy achieved by Adam and SR-Adam with a batch size of 512.  
On CIFAR-10, SR-Adam yields a consistent gain of 1.5--2.2 percentage points across all input noise levels; the improvement is statistically significant ($p\!<\!0.01$) in every case.  
On CIFAR-100 the margin is smaller (around 1 point), yet still significant for clean data and 10\,\% noise; the difference observed at 5\,\% noise does not reach significance ($p\!=\!0.074$).

\begin{table}[t]
\centering
\caption{Paired $t$-test between Adam and SR-Adam (best-run accuracy, $n\!=\!5$, batch size 512). Bold $p$-values are significant at $\alpha\!=\!0.01$.}
\label{tab:adam_vs_sradam}
\begin{tabular}{lcccc}
\toprule
Dataset & Noise & Adam & SR-Adam & $p$-value \\
\midrule
\multirow{3}{*}{CIFAR10}
        & 0\%   & 74.12 & \textbf{75.59} & \textbf{0.003} \\
        & 5\%   & 73.95 & \textbf{75.84} & \textbf{0.000} \\
        & 10\%  & 73.20 & \textbf{75.37} & \textbf{0.000} \\
\midrule
\multirow{3}{*}{CIFAR100}
        & 0\%   & 40.85 & \textbf{42.74} & \textbf{0.004} \\
        & 5\%   & 40.25 & \textbf{41.50} & 0.074 \\
        & 10\%  & 39.14 & \textbf{40.43} & \textbf{0.002} \\
\bottomrule
\end{tabular}
\end{table}

\section{Ablation Study}

We conduct an ablation study to examine the influence of key design choices in SR-Adam.
Specifically, we analyze (i) the effect of the mini-batch size, which governs the stochasticity
of gradient estimates and interacts with the Stein-rule shrinkage mechanism, and
(ii) the effect of the shrinkage scope, i.e., whether shrinkage is applied selectively to
convolutional layers or uniformly to all network parameters.
Unless otherwise stated, all experiments in this section are reported as mean $\pm$
standard deviation over 3--5 independent runs, with other training settings fixed to their
default values used in the main experiments.

\subsection{Effect of Batch Size}

This ablation study examines the sensitivity of SR-Adam to the mini-batch size.
Table~\ref{tab:ablation-bs-cifar10-best} reports the best test accuracy achieved over training epochs
for different batch sizes on CIFAR10 and CIFAR100, using Adam as the baseline optimizer.

For small batch sizes (64 and 128), Adam consistently outperforms SR-Adam on both datasets.
In this regime, the stochastic gradient noise is high, and the Stein-rule shrinkage in SR-Adam
can become overly aggressive, leading to attenuation of informative gradient components
and degraded optimization performance.

As the batch size increases, this behavior gradually changes.
From batch size 256 onward, SR-Adam matches or surpasses Adam, and for larger batch sizes
(512 and 1024), SR-Adam consistently achieves higher mean accuracy on both CIFAR10 and CIFAR100.
This transition is clearly illustrated in Figure~\ref{fig:ablation-bs},
where SR-Adam exhibits improved stability and superior performance in the large-batch regime.

These observations are consistent with the design motivation of SR-Adam:
Stein-rule shrinkage is most effective when gradient estimates are obtained from
high-dimensional representations with moderated stochasticity.
Large batch sizes reduce excessive gradient noise while preserving sufficient variance
for adaptive shrinkage to be beneficial.
Based on this ablation study, we adopt a batch size of 512 for all subsequent experiments,
as it provides a favorable trade-off between accuracy, stability, and computational efficiency.

\begin{table}[t]
  \centering
  \scalebox{0.82}{
\begin{tabular}{llccccc}
    \toprule
    Dataset & Method & BS=64 & BS=128 & BS=256 & BS=512 & BS=1024 \\
    \midrule
    \multirow{2}{*}{CIFAR10} 
    & Adam & \textbf{76.17 $\pm$ 1.15} & \textbf{76.31 $\pm$ 0.52} & 75.52 $\pm$ 0.55 & 73.95 $\pm$ 0.44 & 71.96 $\pm$ 1.07 \\
    & SR-Adam & 72.02 $\pm$ 0.81 & 73.75 $\pm$ 1.87 & \textbf{75.73 $\pm$ 0.35} & \textbf{75.84 $\pm$ 0.31} & \textbf{73.89 $\pm$ 1.68} \\
    \midrule
    \multirow{2}{*}{CIFAR100} 
    & Adam & \textbf{40.24 $\pm$ 0.55} & \textbf{41.48 $\pm$ 0.82} & \textbf{40.67 $\pm$ 1.03} & 40.25 $\pm$ 0.67 & 37.98 $\pm$ 0.52 \\
    & SR-Adam & 32.19 $\pm$ 0.27 & 36.26 $\pm$ 1.96 & 40.47 $\pm$ 1.13 & \textbf{41.50 $\pm$ 1.34} & \textbf{40.88 $\pm$ 0.43} \\
    \bottomrule
\end{tabular}
}
\caption{Comparison of Adam and SR-Adam in terms of best test accuracy (mean $\pm$ std over runs)
across different batch sizes on CIFAR10 and CIFAR100 at noise level 0.05 (visualized in
Figure~\ref{fig:ablation-bs}).
The results highlight a regime shift: Adam performs better for small batch sizes,
whereas SR-Adam becomes superior as the batch size increases.}
  \label{tab:ablation-bs-cifar10-best}
\end{table}

\begin{figure*}[t]
    \centering
    \begin{subfigure}{0.48\textwidth}
        \centering
        \includegraphics[width=\textwidth]{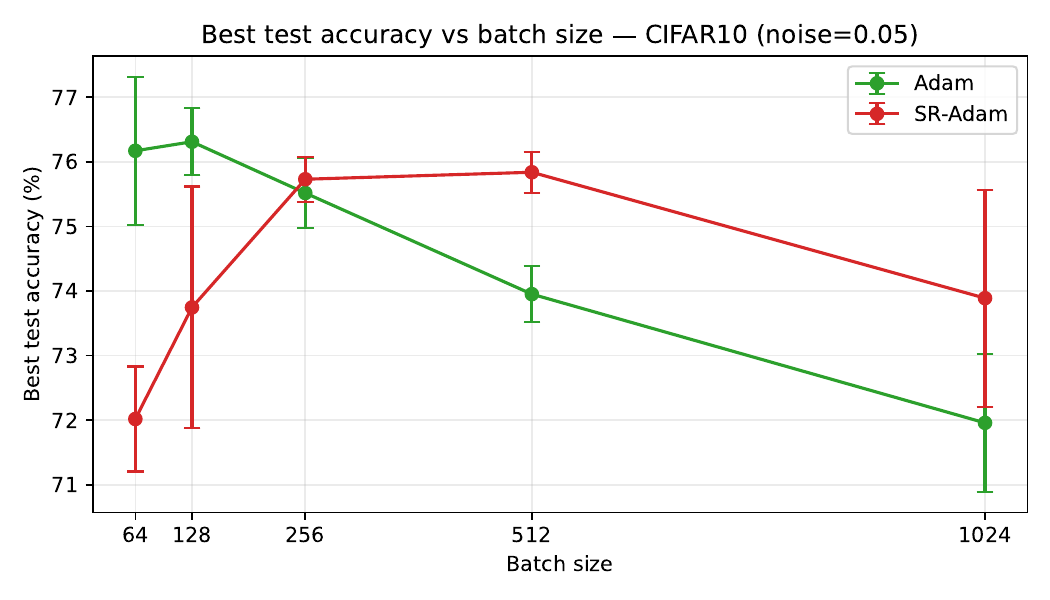}
        \caption{CIFAR10}
        \label{fig:ablation-bs-cifar10}
    \end{subfigure}
    \hfill
    \begin{subfigure}{0.48\textwidth}
        \centering
        \includegraphics[width=\textwidth]{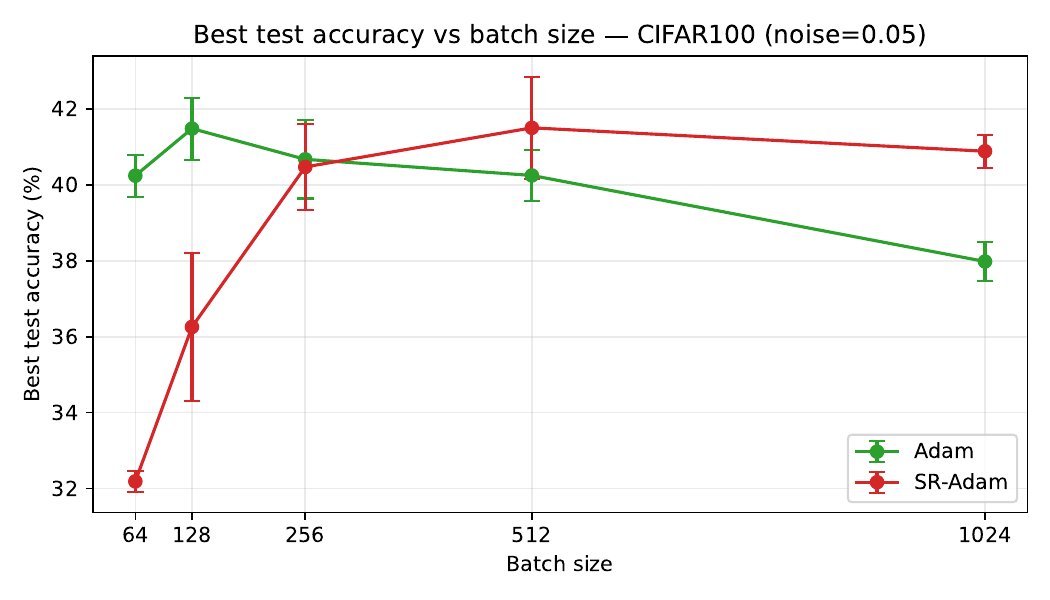}
        \caption{CIFAR100}
        \label{fig:ablation-bs-cifar100}
    \end{subfigure}
\caption{Effect of batch size on best test accuracy (mean $\pm$ std over runs) for Adam and SR-Adam
on CIFAR10 and CIFAR100 at noise level 0.05.
SR-Adam underperforms Adam in the small-batch regime but consistently surpasses it for batch sizes
$> 256$, demonstrating its advantage in large-batch training.}
    \label{fig:ablation-bs}
\end{figure*}

\subsection{Ablation Study on Shrinkage Scope}
\label{sec:ablation-shrinkage-scope}

This ablation study investigates the impact of the shrinkage scope on SR-Adam's performance. We compare three variants: the baseline Adam optimizer, SR-Adam with shrinkage applied selectively to convolutional layers (our default), and a variant where shrinkage is applied uniformly to all network weights (SR-Adam-All-Weights). Experiments are conducted on both CIFAR10 and CIFAR100 under various noise levels to ensure the robustness of our findings.

The results, summarized in Table \ref{tab:ablation-shrinkage-scope} and visualized in Figure \ref{fig:ablation-shrinkage-scope}, demonstrate a clear trend. Across all datasets and noise conditions, SR-Adam-All-Weights consistently underperforms both the Adam baseline and our selectively-applied SR-Adam. Notably, this performance degradation is present even in the noise-free setting, indicating that the issue is intrinsic to the unrestricted application of shrinkage rather than being a side effect of input noise.

This outcome empirically justifies our design choice of restricting shrinkage to high-dimensional convolutional layers. The rationale is threefold:

\begin{enumerate}
\item {Theoretical Grounding:} The benefits of Stein-type shrinkage, which guarantees lower quadratic risk, are most pronounced in high-dimensional parameter spaces ($p \geq$ 3). Convolutional layers, with thousands of parameters, fit this condition well, whereas the final low-dimensional classifier layers do not.

\item {Gradient Characteristics:} Convolutional gradients are often noisier due to batch-dependent feature maps. Shrinkage acts as a regularizer, stabilizing these noisy estimates. In contrast, fully-connected layers receive a more direct supervision signal and benefit from the unmodified, highly adaptive updates of standard Adam.

\item {Empirical Evidence:} Our results confirm that applying shrinkage indiscriminately to all weights, including the low-dimensional projection layers, suppresses informative gradient signals and leads to sub-optimal optimization.
\end{enumerate}

Therefore, the performance gains of SR-Adam are not merely a result of introducing shrinkage, but critically depend on \emph{where} it is applied. This study validates the selective application of Stein-rule shrinkage as a targeted mechanism for improving optimization in convolutional neural networks. Accordingly, SR-Adam-All-Weights is included only as a negative ablation control and is not considered further in the main experimental comparisons.

\begin{table}[t]
  \centering
  \begin{tabular}{llccc}
    \toprule
    Dataset & Method & Noise=0.0 & Noise=0.05 & Noise=0.1 \\
    \midrule
    \multirow{3}{*}{CIFAR10} 
    & Adam & 74.12 $\pm$ 0.67 & 73.95 $\pm$ 0.44 & 73.20 $\pm$ 0.56 \\
    & SR-Adam & \textbf{75.59 $\pm$ 0.56} & \textbf{75.84 $\pm$ 0.31} & \textbf{75.37 $\pm$ 0.69} \\
    & SR-Adam-All-Weights & 70.86 $\pm$ 0.30 & 71.25 $\pm$ 0.50 & 70.44 $\pm$ 0.33 \\
    \midrule
    \multirow{3}{*}{CIFAR100} 
    & Adam & 40.85 $\pm$ 0.62 & 40.25 $\pm$ 0.67 & 39.14 $\pm$ 0.61 \\
    & SR-Adam & \textbf{42.74 $\pm$ 1.21} & \textbf{41.50 $\pm$ 1.34} & \textbf{40.43 $\pm$ 0.33} \\
    & SR-Adam-All-Weights & 34.99 $\pm$ 0.53 & 33.74 $\pm$ 0.64 & 33.03 $\pm$ 0.74 \\
    \bottomrule
  \end{tabular}
  \caption{Comparison of Adam, selective SR-Adam, and SR-Adam with shrinkage applied to all network weights at batch size 512.
Best test accuracy (mean $\pm$ std across 5 runs) is reported for multiple noise levels (visualized in
Figure~\ref{fig:ablation-shrinkage-scope}).
The results show that applying Stein-rule shrinkage to all parameters leads to inferior performance
compared to both Adam and selective SR-Adam.}
  \label{tab:ablation-shrinkage-scope}
\end{table}

\begin{figure*}[t]
    \centering
    \begin{subfigure}{0.48\textwidth}
        \centering
        \includegraphics[width=\textwidth]{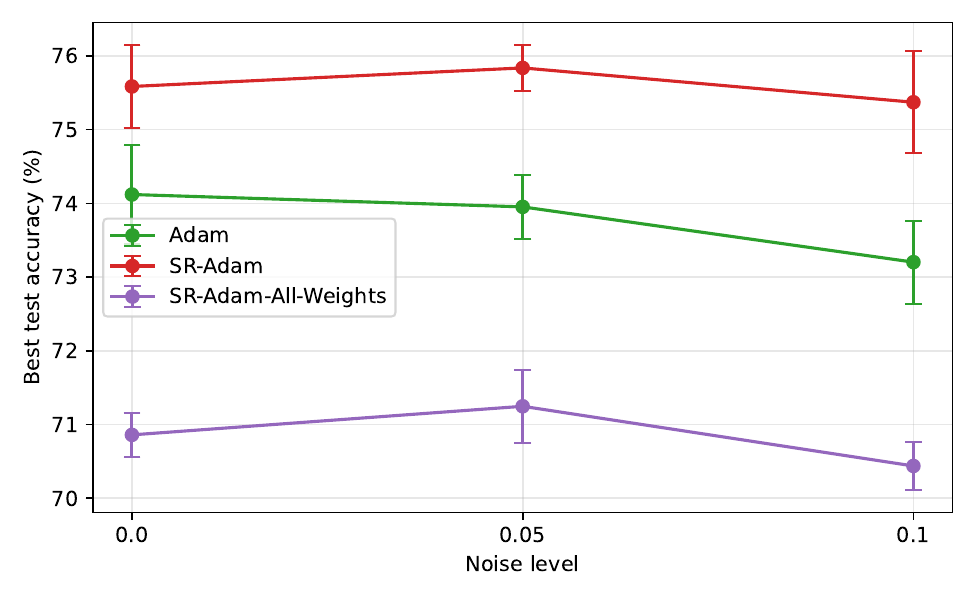}
        \caption{CIFAR10}
        \label{fig:ablation-shrinkage-scope-cifar10}
    \end{subfigure}
    \hfill
    \begin{subfigure}{0.48\textwidth}
        \centering
        \includegraphics[width=\textwidth]{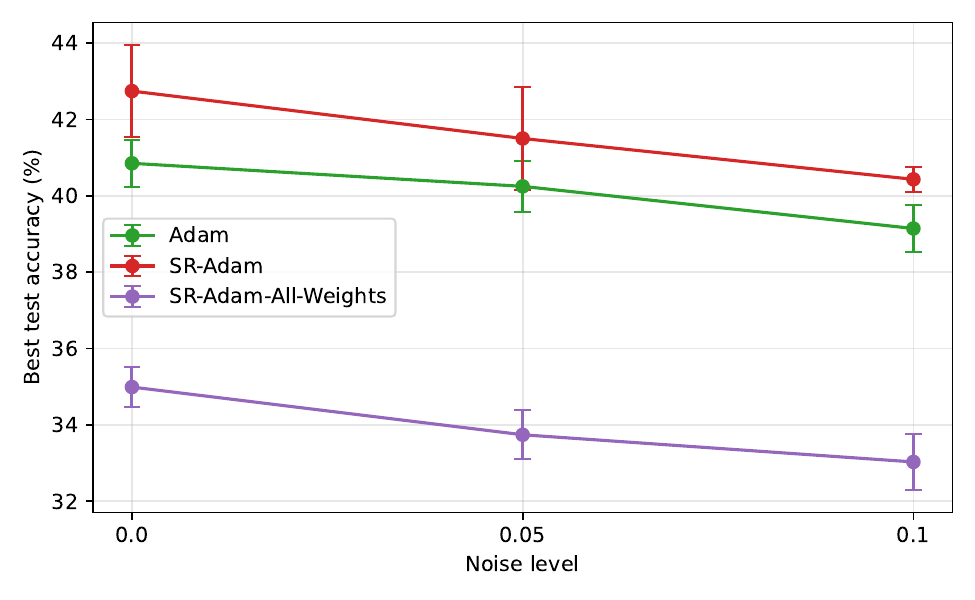}
        \caption{CIFAR100}
        \label{fig:ablation-shrinkage-scope-cifar100}
    \end{subfigure}
\caption{Comparison of Adam, selective SR-Adam, and SR-Adam with shrinkage applied to all weights at batch size 512.
Across all noise levels and both datasets, the all-weights variant performs worse than the other methods,
indicating that unrestricted application of Stein-rule shrinkage is not suitable in this setting.}
    \label{fig:ablation-shrinkage-scope}
\end{figure*}


\section{Discussion and Conclusion}

This paper reexamines stochastic gradient optimization through the lens of high-dimensional statistical decision theory. By treating the mini-batch gradient as a noisy estimator of the population gradient, we demonstrate that classical Stein-rule shrinkage offers a principled mechanism for variance reduction that is directly applicable to modern deep learning. The resulting estimator departs from the conventional emphasis on unbiasedness and instead prioritizes risk minimization under quadratic loss.

From a theoretical standpoint, we show that the proposed Stein-rule gradient estimator uniformly dominates the standard stochastic gradient when the parameter dimension satisfies $p \geq 3$. Under a Gaussian noise model, the estimator is minimax-optimal in the classical decision-theoretic sense, providing a formal justification for adaptive gradient shrinkage in high-dimensional regimes. Importantly, this improvement is achieved without introducing additional hyperparameters, as gradient noise variance is estimated online using second-moment information already available in adaptive optimization methods.

We further demonstrate how the proposed estimator can be instantiated within the Adam optimizer, yielding the SR-Adam algorithm. Empirical results indicate that the benefits of Stein-rule shrinkage are strongly regime-dependent. In particular, SR-Adam consistently improves upon Adam in the large-batch setting, where gradient noise is sufficiently moderate for shrinkage to be effective. These gains are especially pronounced under input noise, where stochastic gradients become increasingly unreliable.

Our ablation studies reveal that the scope of shrinkage plays a critical role. Applying Stein-rule correction selectively to high-dimensional convolutional layers yields consistent improvements, whereas indiscriminate shrinkage across all parameters --including low-dimensional layers-- can degrade performance. This observation underscores that effective shrinkage depends jointly on noise level, parameter dimensionality, and gradient structure.

While the absolute performance gains observed in our experiments are modest, they are consistent, statistically supported, and theoretically meaningful. More importantly, this work illustrates that the classical notion of estimator inadmissibility has concrete implications for modern learning systems. By embedding decision-theoretic optimality into stochastic optimization, we provide a principled alternative to heuristic variance-reduction techniques commonly used in deep learning.

Several directions for future work remain. Extending Stein-rule gradient estimation to other learning paradigms, such as self-supervised, continual, and federated learning, represents a promising avenue. Another important direction is the development of adaptive or data-driven mechanisms for automatically determining the appropriate scope of shrinkage within deep architectures.





\appendix

\section{Proof of the Main Results}
\label{sec:proof}

\begin{proof}[Proof of Theorem~\ref{thm:stein_risk}]
Fix $t$ and condition on $\mathcal{F}_{t-1}$.
Define $\mathbf{Y}=\mathbf{g}_t-\mathbf{m}_{t-1}$ and $\boldsymbol{\mu}=\nabla(\boldsymbol{\theta}_t)-\mathbf{m}_{t-1}$. By the Gaussian noise assumption, $\mathbf{Y}\mid\mathcal{F}_{t-1}\sim\mathcal{N}(\boldsymbol{\mu},\sigma^2I_p)$.

The estimator $\hat{\mathbf{g}}_t^{S+}$ can be rewritten as
\[
\hat{\boldsymbol{\mu}}^{S+}
=
\left[
1 - \frac{(p-2)\sigma^2}{\|\mathbf{Y}\|^2}
\right]^+
\mathbf{Y},
\]
which is exactly the positive-part James--Stein estimator
of $\boldsymbol{\mu}$. By Theorem 1.3.2 of \cite{saleh2006} and the original result of
James and Stein \cite{james1961}, for all $p \ge 3$, 
$\mathbb{E}\|\hat{\boldsymbol{\mu}}^{S+}-\boldsymbol{\mu}\|^2<\mathbb{E}\|\mathbf{Y}-\boldsymbol{\mu}\|^2$. Undoing the translation by $\mathbf{m}_{t-1}$ yields $R_t(\hat{\mathbf{g}}_t^{S+})<R_t(\mathbf{g}_t)$, almost surely, which completes the proof.
\end{proof}
We need the following moment bounds of the Stein gradient estimator. 

\begin{lemma}\label{lem:second_moment}
Under the Gaussian noise assumption, $\mathbb{E}\|\hat{\mathbf{g}}_t^{S+}\|^2<\infty$.
\end{lemma}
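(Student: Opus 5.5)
We prove the bound first conditionally on $\mathcal{F}_{t-1}$ and then pass to the unconditional expectation. Write $\hat{\mathbf{g}}_t^{S+}=\mathbf{m}_{t-1}+c_t\mathbf{Y}$, where $\mathbf{Y}=\mathbf{g}_t-\mathbf{m}_{t-1}$ and
\[
c_t=\left[1-\frac{(p-2)\sigma^2}{\|\mathbf{Y}\|^2}\right]^+ .
\]
The key observation is that $0\le c_t\le 1$ deterministically. Hence $\|c_t\mathbf{Y}\|\le\|\mathbf{Y}\|$ pointwise, and the event $\|\mathbf{Y}\|=0$, where the ratio is undefined, has probability zero and can be ignored. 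No integrability of $1/\|\mathbf{Y}\|^2$ is needed. Using $\|a+b\|^2\le 2\|a\|^2+2\|b\|^2$, we get
\[
\|\hat{\mathbf{g}}_t^{S+}\|^2\le 2\|\mathbf{m}_{t-1}\|^2+2\|\mathbf{g}_t-\mathbf{m}_{t-1}\|^2\le 6\|\mathbf{m}_{t-1}\|^2+4\|\mathbf{g}_t\|^2 .
\]

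Next we take conditional expectations. Conditioned on $\mathcal{F}_{t-1}$, $\mathbf{g}_t\sim\mathcal{N}(\nabla J(\boldsymbol{\theta}_t),\sigma^2I_p)$, so $\mathbb{E}[\|\mathbf{g}_t\|^2\mid\mathcal{F}_{t-1}]=\|\nabla J(\boldsymbol{\theta}_t)\|^2+p\sigma^2$. This gives
\[
\mathbb{E}\bigl[\|\hat{\mathbf{g}}_t^{S+}\|^2\mid\mathcal{F}_{t-1}\bigr]\le 6\|\mathbf{m}_{t-1}\|^2+4\|\nabla J(\boldsymbol{\theta}_t)\|^2+4p\sigma^2<\infty \quad\text{a.s.}
\]
Under the conditional (decision-theoretic) reading of the paper, this already proves the lemma. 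Alternatively, the sharper bound $R_t(\hat{\mathbf{g}}_t^{S+})\le p\sigma^2$ from Theorem~\ref{thm:stein_risk} yields
\[
\mathbb{E}\bigl[\|\hat{\mathbf{g}}_t^{S+}\|^2\mid\mathcal{F}_{t-1}\bigr]\le 2p\sigma^2+2\|\nabla J(\boldsymbol{\theta}_t)\|^2 .
\]

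For the unconditional statement we take expectations via the tower property. This requires $\mathbb{E}\|\mathbf{m}_{t-1}\|^2<\infty$ and $\mathbb{E}\|\nabla J(\boldsymbol{\theta}_t)\|^2<\infty$, and this is the main obstacle.

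\textbf{Momentum term.} Since $\mathbf{m}_{t-1}$ is a convex combination of $\mathbf{m}_0=\mathbf{0}$ and the past corrected gradients $\hat{\mathbf{g}}_s$, $s\le t-1$, Jensen's inequality gives $\|\mathbf{m}_{t-1}\|^2\le\max_{s<t}\|\hat{\mathbf{g}}_s\|^2\le\sum_{s<t}\|\hat{\mathbf{g}}_s\|^2$. So finiteness of $\mathbb{E}\|\mathbf{m}_{t-1}\|^2$ follows by induction on $t$, with the warm-up steps handled the same way using raw Gaussian gradients.

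\textbf{True-gradient term.} By $L$-smoothness, $\|\nabla J(\boldsymbol{\theta}_t)\|\le\|\nabla J(\boldsymbol{\theta}_0)\|+L\|\boldsymbol{\theta}_t-\boldsymbol{\theta}_0\|$. The Adam step has coordinates $m_{t,j}/(\sqrt{v_{t,j}}+\epsilon)$, which are bounded in magnitude by a constant depending only on $\beta_1,\beta_2$ (or crudely by $\|\mathbf{m}_t\|/\epsilon$). Hence $\|\boldsymbol{\theta}_t-\boldsymbol{\theta}_0\|$ is bounded deterministically or in $L^2$ for each finite $t$.

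Combining the two bounds, induction on $t$ closes the argument and gives $\mathbb{E}\|\hat{\mathbf{g}}_t^{S+}\|^2<\infty$ for every $t$.
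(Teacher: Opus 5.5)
Your argument is correct and follows the same route as the paper: because the positive-part factor lies in $[0,1]$, $\|\hat{\mathbf{g}}_t^{S+}\|$ is dominated pointwise by $\|\mathbf{g}_t\|$ and $\|\mathbf{m}_{t-1}\|$, and one takes expectations. Your induction showing $\mathbb{E}\|\mathbf{m}_{t-1}\|^2<\infty$ and $\mathbb{E}\|\nabla J(\boldsymbol{\theta}_t)\|^2<\infty$ supplies what the paper's two-line proof silently assumes, and it works equally for the SGD-type update $\boldsymbol{\theta}_{t+1}=\boldsymbol{\theta}_t-\alpha_t\hat{\mathbf{g}}_t^{S+}$ used in the proof of Theorem~\ref{thm:convergence}; rely on the crude $\|\mathbf{m}_t\|/\epsilon$ bound for the Adam step, since a $t$-uniform constant depending only on $\beta_1,\beta_2$ needs $\beta_1^2<\beta_2$.
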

\begin{proof}
Since the shrinkage factor is bounded in $[0,1]$,
\[
\|\hat{\mathbf{g}}_t^{S+}\|
\le
\|\mathbf{g}_t\| + \|\mathbf{m}_{t-1}\|.
\]

Taking expectations and using finite second moments of
$\mathbf{g}_t$ completes the proof.
\end{proof}

\begin{proof}[Proof of Theorem~\ref{thm:variance_consistency}]
Adam updates have the form
\begin{equation*}
m_{t,j}=(1-\beta_1)\sum_{k=1}^t \beta_1^{t-k} g_{k,j},\quad v_{t,j}=(1-\beta_2)\sum_{k=1}^t \beta_2^{t-k} g_{k,j}^2.    
\end{equation*}
These are geometrically weighted averages.
By strict stationarity, ergodicity, and finite fourth moments,
the ergodic theorem for weighted sums implies (see \cite{krengel1985ergodic})
\begin{equation*}
m_{t,j} \xrightarrow{L^1} \mathbb{E}[g_{t,j}], \quad v_{t,j} \xrightarrow{L^1} \mathbb{E}[g_{t,j}^2].    
\end{equation*}
Thus, $v_{t,j}-m_{t,j}^2\xrightarrow{L^1}\mathrm{Var}(g_{t,j})$. Averaging over $j=1,\dots,p$ preserves convergence by linearity.
\end{proof}

\begin{proof}[Proof of Theorem~\ref{thm:convergence}]
The parameter update is $\boldsymbol{\theta}_{t+1}=\boldsymbol{\theta}_t-\alpha_t\hat{\mathbf{g}}_t^{S+}$. Decompose
\begin{equation*}
\hat{\mathbf{g}}_t^{S+}=\nabla J(\boldsymbol{\theta}_t)+\boldsymbol{\xi}_t,    
\end{equation*}
where $(\boldsymbol{\xi}_t)$ is a martingale difference sequence with
bounded second moments (Lemma~\ref{lem:second_moment}). Thus the recursion is a stochastic approximation of Robbins--Monro type. Since $J$ is smooth and bounded below, standard results \cite{borkar2008,bottou2018} imply
\begin{equation*}
\sum_t \alpha_t \|\nabla J(\boldsymbol{\theta}_t)\|^2 < \infty
\quad \text{a.s.}.    
\end{equation*}
Hence $\liminf_{t \to \infty} \|\nabla J(\boldsymbol{\theta}_t)\| = 0$ almost surely.
\end{proof}


\begin{proof}[Proof of Theorem~\ref{thm:minimax}]
\textbf{(i) Proof of the Minimax Lower Bound}: For any estimator $\hat{\boldsymbol{\mu}}$ and any prior $\Pi$ on
$\boldsymbol{\mu}$, the minimax risk satisfies $R^*\geq\inf_{\hat{\boldsymbol{\mu}}}\mathbb{E}_{\Pi}\mathbb{E}_{\boldsymbol{\mu}}\|\hat{\boldsymbol{\mu}} - \boldsymbol{\mu}\|^2$.

Choose the Gaussian prior $\boldsymbol{\mu} \sim \mathcal{N}(\mathbf{0}, \tau^2 I_p)$. The Bayes estimator is
\begin{equation*}
\hat{\boldsymbol{\mu}}^{\text{Bayes}}=\frac{\tau^2}{\tau^2 + \sigma^2} \mathbf{g}_t,
\end{equation*}
with Bayes risk
\begin{equation*}
R_{\Pi}=p \frac{\tau^2 \sigma^2}{\tau^2 + \sigma^2}.    
\end{equation*}
Letting $\tau^2 \to \infty$ yields $R_{\Pi} \to p\sigma^2$, hence $R^* \ge p\sigma^2$.\\
\textbf{(ii) Minimaxity and Inadmissibility of the Unrestricted Gradient}:

For the estimator $\hat{\boldsymbol{\mu}} = \mathbf{g}_t$, $\mathbb{E}_{\boldsymbol{\mu}}
\|\mathbf{g}_t-\boldsymbol{\mu}\|^2=\mathbb{E}\|\boldsymbol{\varepsilon}_t\|^2=p\sigma^2$, independent of $\boldsymbol{\mu}$. Hence $\mathbf{g}_t$ attains the minimax bound and is minimax.
However, by Theorem~\ref{thm:stein_risk}, it is uniformly dominated by
$\hat{\mathbf{g}}_t^{S+}$ and therefore inadmissible.\\
\textbf{(iii) Minimaxity of the Positive-Part Stein Estimator}:
By Theorem~\ref{thm:stein_risk},
\begin{equation*}
\mathbb{E}_{\boldsymbol{\mu}}\|\hat{\mathbf{g}}_t^{S+} - \boldsymbol{\mu}\|^2 \leq
p\sigma^2 \quad \forall \boldsymbol{\mu}.    
\end{equation*}
Thus the maximum risk of $\hat{\mathbf{g}}_t^{S+}$ does not exceed the minimax lower bound. Consequently, it is minimax. Using \cite{baranchik1970}, the strict inequality for $\boldsymbol{\mu} \neq \mathbf{0}$ follows from the strict dominance property of the positive-part James--Stein estimator.

\end{proof}

\vskip 0.2in
\bibliography{sample}

@article{baranchik1970,
  author  = {Baranchik, A. J.},
  title   = {A Family of Minimax Estimators of the Mean of a Multivariate Normal Distribution},
  journal = {Annals of Mathematical Statistics},
  volume  = {41},
  number  = {2},
  pages   = {642--645},
  year    = {1970}
}

@book{borkar2008,
  author    = {Borkar, Vivek S.},
  title     = {Stochastic Approximation: A Dynamical Systems Viewpoint},
  publisher = {Cambridge University Press},
  address   = {Cambridge},
  year      = {2008}
}

@article{bottou2018,
  author  = {Bottou, L{\'e}on and Curtis, Frank E. and Nocedal, Jorge},
  title   = {Optimization methods for large-scale machine learning},
  journal = {SIAM Review},
  volume  = {60},
  number  = {2},
  pages   = {223--311},
  year    = {2018}
}

@article{duchi2011,
  author  = {Duchi, John and Hazan, Elad and Singer, Yoram},
  title   = {Adaptive subgradient methods for online learning and stochastic optimization},
  journal = {Journal of Machine Learning Research},
  volume  = {12},
  number  = {61},
  pages   = {2121--2159},
  year    = {2011}
}

@book{efron2012,
  author    = {Efron, Bradley},
  title     = {Large-Scale Inference: Empirical {B}ayes Methods for Estimation, Testing, and Prediction},
  series    = {Institute of Mathematical Statistics Monographs},
  volume    = {1},
  publisher = {Cambridge University Press},
  year      = {2012}
}

@inproceedings{james1961,
  author    = {James, W. and Stein, C.},
  title     = {Estimation with quadratic loss},
  booktitle = {Proceedings of the Fourth Berkeley Symposium on Mathematical Statistics and Probability},
  volume    = {1},
  pages     = {361--379},
  year      = {1961}
}

@inproceedings{kingma2015,
  author    = {Kingma, Diederik P. and Ba, Jimmy},
  title     = {Adam: A method for stochastic optimization},
  booktitle = {International Conference on Learning Representations (ICLR)},
  year      = {2015}
}

@book{krengel1985ergodic,
  author    = {Krengel, Ulrich},
  title     = {Ergodic Theorems},
  series    = {de Gruyter Studies in Mathematics},
  volume    = {6},
  publisher = {Walter de Gruyter \& Co.},
  address   = {Berlin},
  year      = {1985}
}

@article{mandt2017,
  author  = {Mandt, Stephan and Hoffman, Matthew D. and Blei, David M.},
  title   = {Stochastic Gradient Descent as Approximate {B}ayesian Inference},
  journal = {Journal of Machine Learning Research},
  volume  = {18},
  number  = {134},
  pages   = {1--35},
  year    = {2017}
}

@article{polyak1964,
  author  = {Polyak, Boris T.},
  title   = {Some methods of speeding up the convergence of iteration methods},
  journal = {USSR Computational Mathematics and Mathematical Physics},
  volume  = {4},
  number  = {1},
  pages   = {1--17},
  year    = {1964}
}

@book{saleh2006,
  author    = {Saleh, A. K. Md. Ehsanes},
  title     = {Theory of Preliminary Test and {S}tein-Type Estimation with Applications},
  publisher = {Wiley},
  year      = {2006}
}

@inproceedings{stein1956,
  author    = {Stein, C.},
  title     = {Inadmissibility of the usual estimator for the mean of a multivariate normal distribution},
  booktitle = {Proceedings of the Third Berkeley Symposium on Mathematical Statistics and Probability},
  volume    = {1},
  pages     = {197--206},
  year      = {1956}
}

@inproceedings{sutskever2013,
  author    = {Sutskever, Ilya and Martens, James and Dahl, George and Hinton, Geoffrey},
  title     = {On the importance of initialization and momentum in deep learning},
  booktitle = {Proceedings of the 30th International Conference on Machine Learning (ICML)},
  pages     = {1139--1147},
  year      = {2013}
}

@article{tieleman2012,
  author  = {Tieleman, Tijmen and Hinton, Geoffrey},
  title   = {Lecture 6.5-rmsprop: Divide the gradient by a running average of its recent magnitude},
  journal = {COURSERA: Neural Networks for Machine Learning},
  volume  = {4},
  number  = {2},
  pages   = {26--31},
  year    = {2012}
}

\end{document}